\documentclass{article}

\PassOptionsToPackage{numbers, compress}{natbib}
\usepackage{arxiv}
\usepackage{tikz}
\usepackage{multirow}
\usetikzlibrary{arrows.meta,positioning,calc,decorations.pathreplacing}

\usepackage{amsthm}
\usepackage{algorithm}
\usepackage{algpseudocode}
\usepackage{amsthm}
\newtheorem{lemma}{Lemma}
\usepackage{amssymb}
\usepackage{xspace}
\newcommand{\ie}{i.e.\xspace}

\newcommand{\QuasiMoTTo}{QuasiMoTTo\xspace}
\newcommand{\qmt}{\QuasiMoTTo}
\newtheorem{theorem}{Theorem}
\newtheorem{proposition}{Proposition}
\usepackage{graphicx}

\usepackage{algpseudocode}  % \Require, \Ensure, \State, \Comment, \Return, \If, \For, etc.
\usepackage{amsmath,amsfonts,bm}

\def\eqref#1{equation~\ref{#1}}
\def\1{\bm{1}}

\def\mL{{\bm{L}}}

\def\mR{{\bm{R}}}

\DeclareMathAlphabet{\mathsfit}{\encodingdefault}{\sfdefault}{m}{sl}
\SetMathAlphabet{\mathsfit}{bold}{\encodingdefault}{\sfdefault}{bx}{n}

\newcommand{\R}{\mathbb{R}}

\newcommand{\bx}{\mathbf{x}}

\newcommand\blfootnote[1]{%
  \begingroup
  \renewcommand\thefootnote{}\footnote{#1}%
  \addtocounter{footnote}{-1}%
  \endgroup
}

\newcommand{\Dec}{\operatorname{Dec}}
\usepackage[utf8]{inputenc} % allow utf-8 input
\usepackage[T1]{fontenc}    % use 8-bit T1 fonts
\usepackage{hyperref}       % hyperlinks
\usepackage{url}            % simple URL typesetting
\usepackage{booktabs}       % professional-quality tables
\usepackage{amsfonts}       % blackboard math symbols
\usepackage{nicefrac}       % compact symbols for 1/2, etc.
\usepackage{microtype}      % microtypography
\usepackage{xcolor}         % colors

\definecolor{cGray}    {HTML}{8FA3C2}  \definecolor{cGrayD}   {HTML}{2C3E5C}
\definecolor{cAmber}   {HTML}{F5C04A}  \definecolor{cAmberD}  {HTML}{8B6914}
\definecolor{cGreen}   {HTML}{4FB286}  \definecolor{cGreenD}  {HTML}{1F5A40}
\definecolor{cTerra}   {HTML}{EE6C5C}  \definecolor{cTerraD}  {HTML}{8B2D1F}
\definecolor{cViolet}  {HTML}{B89BC9}  \definecolor{cVioletD} {HTML}{4F356A}
\definecolor{cInk}     {HTML}{1A1A1A}
\definecolor{cMid}     {HTML}{555555}
\definecolor{cAxis}    {HTML}{444444}
\definecolor{cU}       {HTML}{3D5A80}
\definecolor{cGap}     {HTML}{B0361C}

\newcommand{\iid}{i.i.d.\@\xspace}

\title{
QuasiMoTTo: Quasi-Monte Carlo Test-Time Scaling
}

\author{%
  Michael Y. Li\textsuperscript{*} \\
  Stanford University \\
  \And
  Anthony Zhan\textsuperscript{*} \\
  Stanford University \\
  \AND
  Kanishk Gandhi \\
  Stanford University \\
  \And
  Noah D. Goodman \\
  Stanford University \\
  \And
  Emily B. Fox \\
  Stanford University \\
}

\usepackage{amsthm}
\begin{document}

\maketitle

\begin{abstract}
Scaling inference compute, by generating many parallel attempts per problem, is a costly but reliable lever for improving language model capabilities.
By default these attempts are generated independently, wasting inference compute on redundant solutions.  
This waste seems unavoidable.
After all, independence is what makes parallel sampling trivial to scale.
However, this tradeoff is not fundamental: there is a rich design space of samplers that generate \emph{correlated but exact} samples entirely in \emph{parallel}. 
We explore this design space as an avenue for improving sample efficiency in scaling inference compute and reinforcement learning (RL). 
Concretely, we introduce \textbf{QuasiMoTTo}, which uses correlated samples as a \emph{drop-in replacement} for i.i.d. samples.
To generate these samples, \qmt uses a reparameterization of autoregressive sampling as inverse-CDF sampling and draws the underlying uniforms with quasi-Monte Carlo (QMC); because QMC spreads the uniforms out more evenly than i.i.d., the resulting samples cover the output space with far less redundancy. 
Even though the batch is correlated, each sample is marginally distributed according to the language model, so we can use the batch for policy-gradient training.
Our empirical analysis focuses on understanding how efficiently QuasiMoTTo can turn compute into performance.
To evaluate correlated samplers, whose dependence breaks standard pass@$k$ estimators, we first develop an unbiased bootstrap estimator.
Across four reasoning benchmarks, QuasiMoTTo matches i.i.d.~pass@$k$ accuracy with 25--47\% fewer samples.
Strikingly, QuasiMoTTo often saturates an upper bound on pass@$k$ that holds for any marginal-preserving sampler.
We also apply QuasiMoTTo to policy-gradient RL (GRPO) where it matches i.i.d. performance with 50\% fewer training steps.
These gains come from higher coverage, which yields a stronger learning signal per batch.
\end{abstract}
\blfootnote{\textsuperscript{*} Equal contribution. Correspondence to \texttt{michaelyli@stanford.edu}.}

\section{Introduction}
\input{figure_tex/method_illustrative}
Parallel sampling, generating $k$ independent (i.i.d.) attempts per prompt, is a fundamental primitive in inference compute scaling and reinforcement learning (RL).
But independence is wasteful: many samples in a batch visit the same high-probability regions. 
This redundancy is costly. 
In test-time scaling, ideally each sample would explore a different approach, but independent samples rediscover the same solutions; therefore, finding a correct solution to a hard problem can require a very large $k$.
In RL, the same redundancy can force large group sizes because methods like GRPO~\citep{guo2025deepseekr1} need within-group reward variation to produce a learning signal.
Reducing this redundancy can mitigate the steep cost of methods that scale inference compute, which are driving tremendous progress in language model (LM) capabilities~\citep{openai2026openaio1card}.

Intuitively, we could reduce redundancy by \emph{correlating} these samples so that they depend on each other. 
We could correlate these samples sequentially — conditioning each draw on the previous ones — but this forfeits the parallelism that makes independent sampling simple to scale.
Parallel alternatives either distort the LM's next-token distribution~\citep{vijayakumar2018diverse,pmlr-v97-kool19a}, biasing the estimates RL depends on, or exploit its sensitivity to the prompt~\citep{novikov2025alphaevolvecodingagentscientific}, where any increase in coverage is incidental rather than guaranteed.

We observe that there is a rich design space of sampling methods that can generate correlated samples that decode completely in \emph{parallel} yet remain exact samples from the LM~\citep{vilnis2023}.
We explore this design space in \textbf{\qmt}, which replaces \iid samples with these dependent samples in inference compute scaling and RL. 
To do this, \qmt leverages Monte Carlo methods and coding theory, building on recent work on arithmetic sampling~\citep{vilnis2023,parashar2025quasirandommultisampleinferencelarge}. 
Instead of drawing samples independently, we generate samples that repel one another to cover the output distribution more efficiently~\citep{owen2013monte}.
Concretely, we first use randomized Quasi-Monte Carlo (QMC) to generate $k$ samples in the unit interval that are more evenly spread than \iid uniform samples. 
We then map these to $k$ LM samples via arithmetic coding~\citep{mackay2002}.
Even though samples are coordinated to boost coverage, the procedure is both embarrassingly \emph{parallel} and \emph{exact}: each rollout is distributed exactly as if it had been sampled \iid from the LM.
Generation is parallel because the arithmetic coding parameterization of autoregressive sampling reduces sampling a sequence to pushing a single uniform through the LM's inverse CDF: once we draw a batch of uniforms upfront, each rollout decodes independently, with no communication across rollouts.
The procedure produces exact LM samples because QMC guarantees each sample marginally uniform, even though samples are correlated.
This ensures inverse-CDF sampling~\citep{wasserman2010statistics} produces an exact draw from the LM.

The goal of using dependent samples is to improve sample efficiency: fewer samples give the same performance.
We therefore evaluate \qmt with a focus on understanding whether we can generate fewer samples from an LM to get the same performance as i.i.d. sampling in test-time scaling and RL.
\textbf{(1) Test-time scaling.}
To measure how much more coverage \qmt provides for fixed $k$, we compare its pass@$k$ on reasoning tasks relative to i.i.d. sampling. 
To characterize the best possible performance, we compare against a union-bound ceiling reflecting the best pass@$k$
achievable by any marginal-preserving sampler.
Strikingly, on reasoning benchmarks, \qmt nearly saturates this ceiling, leaving little room for any marginal-preserving sampler to perform better.
\textbf{(2) Reinforcement learning.} Because our sampler produces exact samples from the LM, it integrates directly into RL pipelines like GRPO. 
We show that using \qmt reduces the number of training steps required to achieve a target performance relative to i.i.d. sampling.

\section{\qmt}
\label{sec:method}

\subsection{Setup}
\label{sec:setup}
\begin{figure}[t]
\centering
\includegraphics[width=0.5\linewidth]{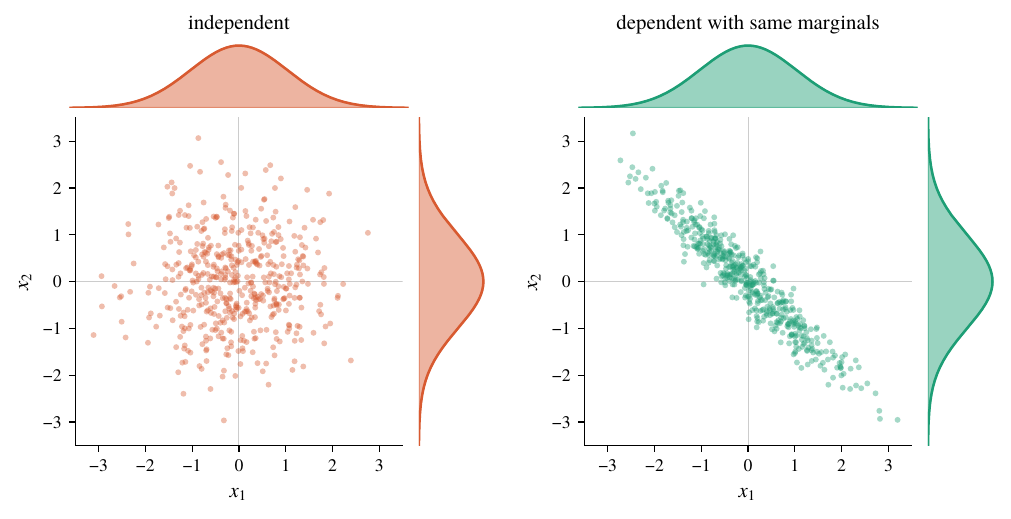}
\caption{\textbf{Same marginals, different joints.} Two distributions over $(x_1, x_2)$ with identical marginals (see histograms) but different dependence structure: independent (left) versus negatively correlated (right). 
Since we often only need to preserve marginals, we can exploit this flexibility to generate LM rollouts that are correlated in a way that better covers the output space.}
\label{fig:marginals_vs_joint}
\end{figure}
A common motif in language modeling is drawing a batch of $k$ samples
$\{\tau_i\}_{i=1}^k$ from a model $\pi_\theta$ and aggregating over them. GRPO
estimates the policy gradient by averaging over a sampled group of responses;
test-time scaling produces a final answer by combining parallel attempts. 
In
each case the batch is a finite-sample approximation to $\pi_\theta$, and we
want it to be (1) \emph{marginally correct}---each $\tau_i$ distributed
according to $\pi_\theta$---and (2) \emph{high-coverage}---spread across the
model's support rather than clumped on a few modes.
The key observation is that correctness asks only for the right marginals, which leaves the joint free; there is a large space of samplers that design the joint distribution in a way that improves coverage.

In particular, marginal correctness is all that average-type estimators need. 
Let $\mu$ be the
joint law of the batch $(\tau_1,\dots,\tau_k)$ and $\mu_i$ its $i$-th marginal.
For any per-rollout quantity $h$,
\[
\mathbb{E}_\mu\!\left[\frac{1}{k}\sum_{i=1}^k h(\tau_i)\right]
= \frac{1}{k}\sum_{i=1}^k \mathbb{E}_{\tau_i \sim \mu_i}\!\left[h(\tau_i)\right]
= \mathbb{E}_{\tau \sim \pi_\theta}\!\left[h(\tau)\right],
\]
where the first equality is linearity of expectation and holds for \emph{any}
joint $\mu$, and the second uses only the marginal condition $\mu_i = \pi_\theta$.
The vanilla policy-gradient estimator with no baselines ($h(\tau) = \nabla\log\pi_\theta(\tau)\,R(\tau)$)
is one such average.
A marginally correct sampler is therefore a drop-in replacement for
i.i.d.\ in every estimator of this form.
The i.i.d.\ default corresponds to the product law
$\mu = \pi_\theta^{\otimes k}$, but that is only one joint among the many that
share the marginals $\mu_i = \pi_\theta$. 
Two distributions can agree on their
marginals and still have substantially different joint distributions---Figure~\ref{fig:marginals_vs_joint} illustrates this for Gaussians.
Another intuitive example is an antithetic pair. 
If we draw $u \sim \mathrm{Unif}[0,1]$, then the tuple $(u, 1-u)$ has uniform marginals, by symmetry, but the coordinates are negatively correlated.

We can exploit this flexibility to design the joint distribution in a way that boosts coverage.
This can often have beneficial effects on post-training. 
For example, policy gradient-based RL methods like GRPO benefit from higher coverage, since there is only a non-zero gradient signal when at least one of the attempts in a group is correct.

The construction has two parts: a randomized
Quasi-Monte Carlo coupling on $\mathrm{Unif}[0,1]$ that induces the dependence
(\S\ref{sec:primer}), and arithmetic coding~\citep{vilnis2023, mackay2002} that maps those uniforms to exact
samples from $\pi_\theta$, preserving the marginals by construction
(\S\ref{sec:inverse-cdf}).

\subsection{Generating dependent samples via Quasi-Monte Carlo}
\label{sec:qmc}
Randomized Quasi-Monte Carlo (QMC) is a technique to generate dependent samples with target marginals. 
Instead of $k$ independent points, it constructs $k$
\emph{low-discrepancy} points that cover the space more evenly than \iid sampling, while ensuring each point is marginally uniform. 
We describe a number of different methods for doing this below.
This marginally uniform property is crucial. 
In particular, for the inverse CDF sampling in Section~\ref{sec:inverse-cdf} to be correct (\ie, each completion is an exact draw from the LM), the samples must be marginally uniform.
For a visualization of these methods, see Figure~\ref{fig:sampler_intuition}.

\label{sec:primer}

\paragraph{Lattice}
This procedure involves placing $k$ points on the regular grid $\{0, 1/k, \ldots, (k-1)/k\}$, drawing a single shared offset $\Delta \sim \mathrm{Unif}[0,1]$, giving
\[
    U_i \;=\; \Bigl(\tfrac{i-1}{k} + \Delta\Bigr) \bmod 1, \qquad i = 1, \ldots, k.
\]
This operation wraps points randomly along the unit interval.
Even though the $\{U_i\}_{i=1}^k$ are more spread-out by construction, the marginal distribution of each $U_i$ is exactly $\mathrm{Unif}[0,1]$ due to the random shift.
To see this, it is useful to think of this procedure as producing points on the unit circle: defining the endpoints as $0$ and $1$ turns $[0,1]$ into a circle with circumference one, and the $\bmod\,1$ operation is exactly a rotation that wraps points around it. 
The $k$ grid points sit at evenly spaced positions on the circle, and $\Delta$ rotates the entire configuration by a uniformly random angle. 
Because $\Delta$ is a uniform random shift, each grid point $i/k$ — viewed individually — is rotated to a uniformly random location on the circle, so $U_i \sim \mathrm{Unif}[0,1]$ marginally. 

\paragraph{Stratified} Instead of using a fixed-width grid, we can also divide the interval $[0,1]$ into $k$ equally sized strata and draw one point per stratum:
\[ U_i\sim\mathrm{Unif}\left[\tfrac{i-1}k,\tfrac ik\right]\qquad i=1,\dots,k. \]
This stratification ensures that no two samples can occupy the same subinterval. We then permute the resulting samples $\{U_i\}_{i=1}^k$ so that each $U_i$ is marginally distributed according to $\mathrm{Unif}[0,1]$.

The aforementioned QMC methods operate over the 1D space $[0,1]$. In the next section, we describe a procedure for mapping these 1D points to sequences using arithmetic coding.
Before doing so, we also present a third QMC method that operates in a higher dimensional space, bypassing the need for this procedure.

\paragraph{Token-level Sobol} Since we care about sampling token sequences, a natural approach is to incorporate the sequence dimension directly into the QMC space itself. Concretely, for sequence length $n$, we can produce uniform random variables $U\in[0,1]^n$ by using Sobol sequences~\citep{SOBOL196786}. Under this construction, the $j$-th coordinate of the point $U$ corresponds to the $j$-th token position, so we can decode by simply passing $U_j$ through the inverse CDF (Figure~\ref{fig:inverse_cdf}) and iterating. Points that collectively cover $[0,1]^n$ should thus produce LLM samples that collectively cover the model's distribution over sequences. We refer readers to e.g. PyTorch's open-source \texttt{SobolEngine} implementation for the actual technical details of constructing $\{U_i\}_{i=1}^k$.

We provide visual and intuitive comparisons for these three QMC methods in Section~\ref{sec:tradeoffs} alongside empirical results in Section~\ref{sec:pass_at_k}.

\subsection{From uniform random variables to token sequences via arithmetic coding}
\label{sec:inverse-cdf}
\begin{figure}[t]
    \centering

\providecolor{cGray}   {HTML}{8FA3C2}  \providecolor{cGrayD}   {HTML}{2C3E5C}
\providecolor{cGreen}  {HTML}{4FB286}  \providecolor{cGreenD}  {HTML}{1F5A40}
\providecolor{cTerra}  {HTML}{EE6C5C}  \providecolor{cTerraD}  {HTML}{8B2D1F}
\providecolor{cAmber}  {HTML}{E8A33D}
\providecolor{cPlum}   {HTML}{8E5A9B}
\providecolor{cInk}    {HTML}{1A1A1A}
\providecolor{cMid}    {HTML}{555555}
\providecolor{cAxis}   {HTML}{444444}

\begin{tikzpicture}[
  x=0.7cm, y=0.7cm,
  every node/.style={inner sep=0pt, outer sep=0pt},
  axis/.style={cInk, line width=0.4pt, opacity=0.5},
]

% =========================================================================
%  Question box — width capped to match the three-panel diagram (~10.5cm)
% =========================================================================
\node[draw=cAxis, line width=0.6pt, rounded corners=4pt,
      inner sep=8pt, anchor=north west, text width=10.2cm]
  at (0, 9.6) {%
    \centering
    {\small\bfseries Find all zeros in the indicated finite field of the given polynomial with coefficients in that field. $x^5 + 3x^3 + x^2 + 2x$ \quad in \quad $\mathbb{Z}_5$}\\[4pt]
    \makebox[\linewidth][c]{%
      A.\ $0$ \qquad
      B.\ $1$ \qquad
      C.\ $0,1$ \qquad
      \fbox{\textbf{D.\ $0,4$}}%
    }
  };

% =========================================================================
%  Panel headers
% =========================================================================
\node[cInk, font=\bfseries, anchor=west] at (0,    5.7) {1. Logits};
\node[cMid, font=\small,    anchor=west] at (0,    5.15) {$\mathrm{logits} = \pi_\theta(\,\cdot \mid x_{<t})$};

\node[cInk, font=\bfseries, anchor=west] at (5.5,  5.7) {2. Sort descending};
\node[cMid, font=\small,    anchor=west] at (5.5,  5.15) {$\sigma = \mathrm{argsort}(\mathrm{logits})$};

\node[cInk, font=\bfseries, anchor=west] at (11.0, 5.7) {3. CDF on $[0,1]$};
\node[cMid, font=\small,    anchor=west] at (11.0, 5.15) {$x_t = \sigma[\min\{j : F_j \geq u\}]$};

% =========================================================================
%  PANEL 1: raw logits as bars (A B C D, original token order)
% =========================================================================
\pgfmathsetmacro{\baseY}{0.5}
\draw[axis] (0, \baseY) -- (4.5, \baseY);

\foreach \i/\name/\hgt/\col in {%
    0/A/1.95/cGreen,%
    1/B/0.75/cPlum,%
    2/C/1.20/cAmber,%
    3/D/2.55/cTerra}%
{%
  \pgfmathsetmacro{\xL}{0.15 + \i*1.0}
  \pgfmathsetmacro{\xR}{\xL + 0.75}
  \fill[\col] (\xL, \baseY) rectangle (\xR, {\baseY+\hgt});
  \draw[cInk, line width=0.3pt] (\xL, \baseY) rectangle (\xR, {\baseY+\hgt});
  \node[cAxis, font=\footnotesize] at ({(\xL+\xR)/2}, {\baseY-0.25}) {\name};
}

% =========================================================================
%  PANEL 2: sorted-descending probability bars
% =========================================================================
\pgfmathsetmacro{\pscale}{6.4}
\begin{scope}[shift={(5.5, 0)}]
  \draw[axis] (0, \baseY) -- (4.5, \baseY);
  \foreach \i/\name/\prob/\plabel/\col in {%
    0/D/0.40/{.40}/cTerra,%
    1/A/0.30/{.30}/cGreen,%
    2/C/0.20/{.20}/cAmber,%
    3/B/0.10/{.10}/cPlum}%
  {%
    \pgfmathsetmacro{\xL}{0.15 + \i*1.0}
    \pgfmathsetmacro{\xR}{\xL + 0.75}
    \pgfmathsetmacro{\hgt}{\prob*\pscale}
    \fill[\col] (\xL, \baseY) rectangle (\xR, {\baseY+\hgt});
    \draw[cInk, line width=0.3pt] (\xL, \baseY) rectangle (\xR, {\baseY+\hgt});
    \node[cAxis, font=\footnotesize] at ({(\xL+\xR)/2}, {\baseY-0.25}) {\name};
    \node[cMid,  font=\scriptsize]   at ({(\xL+\xR)/2}, {\baseY-0.55}) {\plabel};
  }
\end{scope}

% =========================================================================
%  PANEL 3: CDF as vertical stacked bar on [0,1]
% =========================================================================
\pgfmathsetmacro{\Hcdf}{4.1}
\pgfmathsetmacro{\xCDFL}{12.0}
\pgfmathsetmacro{\xCDFR}{13.5}

\begin{scope}[shift={(0, 0)}]
  \pgfmathsetmacro{\yA}{\baseY}
  \pgfmathsetmacro{\yB}{\baseY + 0.40*\Hcdf}
  \pgfmathsetmacro{\yC}{\baseY + 0.70*\Hcdf}
  \pgfmathsetmacro{\yD}{\baseY + 0.90*\Hcdf}
  \pgfmathsetmacro{\yF}{\baseY + 1.00*\Hcdf}

  \fill[cTerra] (\xCDFL, \yA) rectangle (\xCDFR, \yB);
  \fill[cGreen] (\xCDFL, \yB) rectangle (\xCDFR, \yC);
  \fill[cAmber] (\xCDFL, \yC) rectangle (\xCDFR, \yD);
  \fill[cPlum]  (\xCDFL, \yD) rectangle (\xCDFR, \yF);

  \node[white, font=\bfseries\small] at ({(\xCDFL+\xCDFR)/2}, {(\yA+\yB)/2}) {D};
  \node[white, font=\small]          at ({(\xCDFL+\xCDFR)/2}, {(\yB+\yC)/2}) {A};
  \node[white, font=\scriptsize]     at ({(\xCDFL+\xCDFR)/2}, {(\yC+\yD)/2}) {C};
  \node[white, font=\tiny]           at ({(\xCDFL+\xCDFR)/2}, {(\yD+\yF)/2}) {B};

  \draw[cInk, line width=0.5pt] (\xCDFL, \yA) rectangle (\xCDFR, \yF);
  \foreach \y in {\yB, \yC, \yD} {
    \draw[cInk, line width=0.3pt] (\xCDFL, \y) -- (\xCDFR, \y);
  }

  \pgfmathsetmacro{\xtick}{\xCDFR + 0.15}
  \foreach \y/\lab in {\yA/0, \yB/{.40}, \yC/{.70}, \yD/{.90}, \yF/1} {
    \draw[cInk, line width=0.4pt] (\xCDFR, \y) -- ({\xtick}, \y);
    \node[cAxis, font=\footnotesize, anchor=west] at ({\xtick + 0.05}, \y) {\lab};
  }

  \pgfmathsetmacro{\yU}{\baseY + 0.31*\Hcdf}
  \draw[cTerraD, line width=1.2pt, -{Stealth[length=5pt,width=4pt]}]
    ({\xCDFL - 1.0}, \yU) -- ({\xCDFL - 0.05}, \yU);
  \node[cTerraD, font=\bfseries\small, anchor=east] at ({\xCDFL - 1.05}, \yU) {$u = .31$};

  \draw[cTerraD, line width=1.2pt] (\xCDFL, \yA) rectangle (\xCDFR, \yB);

  \node[cMid, font=\small, anchor=north] at ({(\xCDFL+\xCDFR)/2}, {\baseY - 0.2})
    {$x_t = \text{``D''}$};
\end{scope}

\end{tikzpicture}

\caption{\textbf{Exact sampling from the LM using inverse-CDF sampling.}
The LM produces logits over the answer choices.
We sort them in descending order to yield a permutation $\sigma$ and a sorted
probability vector~(panel~2).
Stacking these probabilities partitions the unit interval into bins whose widths
equal the token probabilities; a uniformly random point lands in each bin with
probability equal to that token's mass.
To sample a token, we first sample $u \in [0,1)$ and find the token whose
interval contains it.
$u = .31$ falls in $[0, .40)$, returning $x_t = \text{``D''}$~(panel~3).
For illustrative purposes, we show decoding a single token but repeating this process yields an entire sequence.
This recovers the same distribution as standard autoregressive sampling, which usually uses Gumbel-max.}
\label{fig:inverse_cdf}
\end{figure}
\label{sec:construction}
Randomized QMC gives us a method of sampling points in the unit interval that are (marginally) uniformly distributed but more evenly spread than \iid samples.
In particular, lattice and stratified sampling give us 1D samples $U\in[0,1]$. 
To map these into token sequences distributed according to the LM's autoregressive distribution $\pi_{\theta}$, we use arithmetic coding.
The key property of arithmetic coding is that it represents token sequences as sub-intervals on the unit interval whose \emph{lengths are equal to the sequence probabilities}. 
So, provided each point is uniformly sampled, it lands in a sub-interval with probability exactly equal to the probability the LM assigns to generating that sequence.
This property enables us to decode a set of QMC points into token sequences that are distributed according to $\pi_\theta$.

More formally, we construct a measure-preserving map from the unit interval to strings $\Phi:[0,1]\to\mathcal{V}^*$: a map with the property that if $U\sim\mathrm{Unif}[0,1]$, $\Phi(U)$ is distributed exactly according to $\pi_\theta$. 
By pushing marginally uniform QMC samples through $\Phi$, we can therefore get $k$ \emph{exact} samples from the language model.
Below, we describe this in more detail.

\paragraph{Arithmetic coding.}
We construct a map based on \emph{arithmetic coding}~\citep{cover2006elements,vilnis2023,mackay2002},
a classic idea from coding theory; we adapt the construction from~\citet{vilnis2023}.
Arithmetic coding represents sequences as nested
partitions of the unit interval. 
We describe how sub-intervals are mapped to sequences via a recursive procedure.
We start with $[0,1]$ and partition it into
bins whose widths are the first-token probabilities $\pi_\theta(x_1)$,
for $x_1\in\mathcal V$. If $u$ falls in the bin for $x_1$, then $x_1$ is the
first token. We then recurse inside that bin: the bin for $x_1$ is partitioned
into sub-bins whose relative widths are given by
$\pi_\theta(\cdot\mid x_1)$, so that the sub-bin for $x_2$ has width $\pi_\theta(x_1)\,\pi_\theta(x_2\mid x_1)$.

Continuing this process, each prefix $x_{1:t}$ is assigned an interval
$I(x_{1:t})\subseteq[0,1]$ of length
$|I(x_{1:t})| =\pi_\theta(x_{1:t}).$
These intervals form a prefix tree: the interval for a prefix contains
sub-intervals corresponding to all possible one-token continuations. 
Sampling a full sequence amounts to drawing a single $u\sim\mathrm{Unif}[0,1]$ and traversing the tree to a leaf node, by following
the unique chain of nested intervals containing $u$; see Figure~\ref{fig:method}.
Since the interval for each sequence has length equal to its probability under $\pi_\theta$, the
resulting sequence is distributed exactly according to the language model, provided the samples are marginally uniform.

\paragraph{Generating a sequence via descending the tree: a numerically stable implementation.}
Evaluating $\Phi(u)$ means finding the leaf interval containing $u$.
To do this, we descend the tree one token at a time, selecting a child via inverse CDF sampling using the language model next-token distribution.
A naive approach tracks the prefix intervals at each step $I(x_{1:t})$ and checks which sub-bucket $u$ belongs to; this is not numerically stable, since this involves representing the raw probability
$\pi_\theta(x_{1:t})$.
Conveniently, we can avoid representing the interval lengths explicitly, by carrying only the position of $u$
\emph{relative} to the current interval. 
Let $u_t\in[0,1]$ be that coordinate after
prefix $x_{<t}$, with $u_1=u$. At each step the conditional $\pi_\theta(\cdot\mid x_{<t})$
partitions $[0,1]$ into one bucket per token; we select the bucket containing $u_t$ by an
inverse-CDF lookup~(Figure~\ref{fig:inverse_cdf}),
\[
  x_t = F_t^{-1}(u_t\mid x_{<t}),\qquad
  F_t(v\mid x_{<t}) = \sum_{v'\le v}\pi_\theta(v'\mid x_{<t}),
\]
We then rescale that bucket $[\ell_t,\ell_t+p_t)$, with $p_t=\pi_\theta(x_t\mid x_{<t})$, back
to the unit interval,
\begin{equation}
  u_{t+1} = \frac{u_t-\ell_t}{p_t},
\label{ref:rescaled_us}    
\end{equation}
giving the local coordinate for the next conditional. 
Every step is then a categorical
draw followed by an affine rescaling. 
The selection reads the same logits already used for softmax decoding, so
the procedure folds into a standard autoregressive sampler at negligible cost; only the running coordinate $u_t$ is added as state. 
Because this coordinate is the sole per-sample state, the $k$ rollouts can run as $k$ independent autoregressive decodes that never interact; generation is as parallelizable as ordinary \iid sampling.

\subsection{Estimators that depend on the      joint}
Section~\ref{sec:setup} showed that for average-type estimators, dependent samples are drop-in replacements for i.i.d.\ samples. 
But not every quantity computed from a batch can be expressed in this way and some commonly used estimators do rely on independence.
This section discusses how to modify these estimators to account for dependent samples. 
More broadly, these case studies point to general recipes for retrofitting estimators built on independence so they remain valid — and unbiased — under dependent samples.

\subsubsection{Dyadic bootstrap estimator of pass@$k$.}
\label{sec:dyadic}
\definecolor{samplecol}{RGB}{30,110,180}
\definecolor{accentcol}{RGB}{220,90,40}
\definecolor{mutedcol}{RGB}{220,220,220}
\begin{figure}[t]
  \centering
  \begin{tikzpicture}[
    bluept/.style={circle, fill=samplecol, draw=samplecol, inner sep=1.7pt},
    orangept/.style={circle, fill=accentcol, draw=accentcol, inner sep=1.7pt},
    mutedpt/.style={circle, fill=mutedcol, draw=mutedcol, inner sep=1.7pt},
    rowlabel/.style={anchor=east, font=\small},
    sublabel/.style={anchor=east, font=\scriptsize\itshape, text=black!65},
  ]
  \def\W{7}
  \pgfmathsetmacro{\R}{0.37}
  \def\Lend{7.6}
  \node[anchor=south, font=\small] at (\W/2, 0.5) {Estimating pass@$4$ from pass@$8$ via dyadic bootstrap};
  % pass@8 (full QMC lattice)
  \begin{scope}[yshift=0cm]
    \draw[line width=0.6pt] (0,0) -- (\Lend,0);
    \foreach \i in {0,1,...,7} {
      \pgfmathsetmacro{\x}{\i+\R}
      \node[bluept] at (\x,0) {};
    }
    \node[rowlabel] at (-0.3, 0.08) {pass@$8$};
    \node[sublabel] at (-0.3,-0.18) {sample};
  \end{scope}
  % pass@4, offset 0
  \begin{scope}[yshift=-1.1cm]
    \draw[line width=0.6pt] (0,0) -- (\Lend,0);
    \foreach \i in {1,3,5,7} {
      \pgfmathsetmacro{\x}{\i+\R}
      \node[mutedpt] at (\x,0) {};
    }
    \foreach \i in {0,2,4,6} {
      \pgfmathsetmacro{\x}{\i+\R}
      \node[orangept] at (\x,0) {};
    }
    \node[rowlabel] at (-0.3, 0.08) {pass@$4$};
    \node[sublabel] at (-0.3,-0.18) {bootstrap sample 1};
  \end{scope}
  % pass@4, offset 1
  \begin{scope}[yshift=-2.2cm]
    \draw[line width=0.6pt] (0,0) -- (\Lend,0);
    \foreach \i in {0,2,4,6} {
      \pgfmathsetmacro{\x}{\i+\R}
      \node[mutedpt] at (\x,0) {};
    }
    \foreach \i in {1,3,5,7} {
      \pgfmathsetmacro{\x}{\i+\R}
      \node[orangept] at (\x,0) {};
    }
    \node[rowlabel] at (-0.3, 0.08) {pass@$4$};
    \node[sublabel] at (-0.3,-0.18) {bootstrap sample 2};
  \end{scope}
  \end{tikzpicture}
  \caption{\textbf{Bootstrapped QMC pass@$k$ estimator.} Because the $k$ samples are dependent rather than i.i.d.\ draws, the standard pass@$k$ estimator is invalid; instead, we exploit that any stride-$2$ subsample is itself a valid pass@$k/2$ lattice. Stride $2$ admits two starting offsets, each yielding an unbiased pass@$4$ estimate from the same pass@$8$ rollout. This generalizes to larger strides.
  }
  \label{fig:bootstrap-passk}
\end{figure}
The typical way of estimating pass@$k$ involves an unbiased estimator introduced in~\citep{chen2021evaluatinglargelanguagemodels}: the procedure draws $N \geq k$ samples, counts the number $c$ that are correct, and computes the closed form
\[
    \widehat{\text{pass@}k} = 1 - \frac{\binom{N-c}{k}}{\binom{N}{k}},
\]
the fraction of size-$k$ subsets that contain at least one correct sample.
This avoids having to draw repeated size-$k$ batches.
This closed form solution is valid when samples are independent and cannot be applied directly to our \qmt samplers. 
For lattice sampling, we develop an analogous \emph{dyadic bootstrap estimator} that subsamples a batch of $N$ responses in a way that exploits the lattice structure to produce unbiased estimates.

\begin{theorem}
\label{thm:dyadic}
Correctness of dyadic bootstrap sampling.
Let \(k=2^L\), and let
\[
    u_i = \left(\Delta + \frac{i}{k}\right) \bmod 1,
    \qquad i=0,\ldots,k-1,
\]
where \(\Delta \sim \mathrm{Unif}[0,1]\). For any \(x \le L\), set \(m=k/2^x\). Then each stride-\(2^x\) subsequence
\[
    u^{(r)}_j := u_{r + 2^x j},
    \qquad j=0,\ldots,m-1,
\]
with \(r \in \{0,\ldots,2^x-1\}\), is distributed exactly as a fresh randomly shifted lattice of size \(m\). Consequently, each such subsequence gives an unbiased pass@\(m\) estimate under the randomized-lattice construction.
\end{theorem}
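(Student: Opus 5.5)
The approach is to rewrite each stride-$2^x$ subsequence explicitly as a shifted lattice with a new offset, and then show that this new offset is again uniform on $[0,1)$.

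\textbf{Step 1: algebraic rewrite.} Fix $x \le L$, $m = k/2^x = 2^{L-x}$, and $r \in \{0,\ldots,2^x-1\}$. Since $2^x j / k = j/m$, we can write
\[
u^{(r)}_j = \Bigl(\Delta + \tfrac{r}{k} + \tfrac{j}{m}\Bigr) \bmod 1
= \Bigl(\Delta_r + \tfrac{j}{m}\Bigr) \bmod 1,
\qquad \Delta_r := \Bigl(\Delta + \tfrac{r}{k}\Bigr) \bmod 1 .
\]
The second equality uses the fact that reduction mod $1$ is compatible with addition, i.e.\ $((a \bmod 1) + b) \bmod 1 = (a+b) \bmod 1$. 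After this step the subsequence $(u^{(r)}_j)_{j=0}^{m-1}$ is the image of $\Delta_r$ under the same deterministic map that produces an $m$-point lattice, $\delta \mapsto ((\delta + j/m) \bmod 1)_{j}$.

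\textbf{Step 2: the new offset is uniform.} This is the only nontrivial point, and it is essentially the circle-rotation argument from the Lattice paragraph of \S\ref{sec:primer}. Lebesgue measure on the circle $\mathbb{R}/\mathbb{Z}$ is invariant under rotation by the fixed amount $r/k$, so $\Delta_r \sim \mathrm{Unif}[0,1)$. To verify this directly, take any $t \in [0,1)$ and compute $P(\Delta_r \le t)$ by splitting into the two cases $\Delta < 1 - r/k$ and $\Delta \ge 1 - r/k$; the two pieces have total length $t$. The endpoint $1$ has probability zero, so whether $\Delta$ is drawn from $[0,1]$ or $[0,1)$ makes no difference.

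\textbf{Step 3: conclusion.} Since the subsequence is a fixed measurable function of $\Delta_r$, and $\Delta_r$ has the same law as a fresh offset $\Delta' \sim \mathrm{Unif}[0,1]$, the random vector $(u^{(r)}_j)_j$ has exactly the law of a fresh randomly shifted lattice of size $m$. This is an equality of joint laws, not only of marginals.

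For the unbiasedness claim, apply the arithmetic-coding map $\Phi$ of \S\ref{sec:inverse-cdf} coordinatewise, with the model's randomness held fixed given the $u$'s. The pass@$m$ indicator $\max_j \mathbf{1}[\Phi(u^{(r)}_j)\text{ correct}]$ is then a fixed function of the subsequence. Its expectation therefore equals the pass@$m$ of the randomized-lattice sampler of size $m$. Averaging over the $2^x$ choices of $r$ preserves unbiasedness by linearity of expectation, even though the subsequences are dependent on one another.

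The main obstacle is minor bookkeeping in Step 1. We must check that the index set $\{r + 2^x j\}$ stays inside $\{0,\ldots,k-1\}$, which holds because $r + 2^x(m-1) \le 2^x - 1 + k - 2^x = k-1$. We must also make sure that the identity of the joint law is stated for the ordered tuple, so that the conclusion does not rely on marginal uniformity alone.
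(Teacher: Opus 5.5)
Your proposal is correct and follows essentially the same route as the paper: you rewrite the stride-$2^x$ subsequence as $(\Delta^{(r)} + j/m) \bmod 1$ with $\Delta^{(r)} = (\Delta + r/k) \bmod 1$, and use that a deterministic shift modulo $1$ preserves the uniform law. Your additional bookkeeping (the explicit CDF check, the index-range check, and stating equality of joint laws rather than only of marginals) just spells out what the paper's proof leaves implicit.
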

\begin{proof}[Proof sketch]
The full proof is given in Appendix~\ref{sec:theory_app} and the intuition is captured in Figure~\ref{fig:bootstrap-passk}; the key observation is that the stride-$2^x$ subsequence is a (random) lattice with spacing $1/m$, plus a deterministic shift. Because a randomly-shifted lattice is invariant under deterministic shifts modulo $1$, the subsequence is distributed as a randomly-shifted $m$-point lattice. 
\end{proof}

By similarly exploiting the structure of the stratified and Sobol samplers, we can also derive bootstrap pass@$k$ estimators for these methods, which we present in Appendix~\ref{sec:theory_app}.

\subsubsection{RLOO estimators.}
\definecolor{colI}{HTML}{C44E52} % K_i
\definecolor{colJ}{HTML}{4C72B0} % K_j
\definecolor{colO}{HTML}{55A868} % overlap

\begin{figure}[t]
  \centering
  \begin{tikzpicture}[scale=1.5,font=\footnotesize]
    % config (lengths as fractions of the circle):
    %   l_i > l_j  ->  min(l)=l_j   (nearer endpoint on the left)
    %   r_i < r_j  ->  min(r)=r_i   (nearer endpoint on the right)
    \def\lI{0.18}\def\rI{0.12}\def\lJ{0.10}\def\rJ{0.22}
    \def\mL{0.10}\def\mR{0.12} % min(l_i,l_j) , min(r_i,r_j)
    % angle(f) = 90 - 360 f : 0 at top, r to the right, l to the left

    % --- shaded overlap (annular sector across 0) ---
    \fill[colO,opacity=0.35]
      ({90-360*\mR}:0.92) arc[start angle={90-360*\mR},end angle={90+360*\mL},radius=0.92]
      -- ({90+360*\mL}:1.30) arc[start angle={90+360*\mL},end angle={90-360*\mR},radius=1.30]
      -- cycle;
    \draw[colO!80,line width=0.4pt,densely dashed] ({90+360*\mL}:0.92)--({90+360*\mL}:1.33);
    \draw[colO!80,line width=0.4pt,densely dashed] ({90-360*\mR}:0.92)--({90-360*\mR}:1.33);

    % --- torus outline ---
    \draw[gray!45,line width=0.6pt] (0,0) circle (1.0);

    % --- arcs (drawn over the top, outside the outline) ---
    \draw[colI,line width=1.8pt,line cap=round]
      ({90+360*\lI}:1.10) arc[start angle={90+360*\lI},end angle={90-360*\rI},radius=1.10];
    \draw[colJ,line width=1.8pt,line cap=round]
      ({90+360*\lJ}:1.22) arc[start angle={90+360*\lJ},end angle={90-360*\rJ},radius=1.22];

    % --- endpoint labels ---
    \node[colI] at ({90+360*\lI}:1.34) {$\ell_i$};
    \node[colJ] at ({90+360*\lJ}:1.34) {$\ell_j$};
    \node[colI] at ({90-360*\rI}:1.34) {$r_i$};
    \node[colJ] at ({90-360*\rJ}:1.34) {$r_j$};

    % --- result, tied to the shaded arc ---
    \draw[gray!55,line width=0.4pt] (90:1.33)--(90:1.52);
    \node[anchor=south,font=\small] at (90:1.5)
      {$\min(\ell_i,\ell_j)+\min(r_i,r_j)$};
  \end{tikzpicture}
  \caption{%
The joint probability of two sequences is determined by the intersection of intervals whose endpoints correspond to the relative positions of each sampled uniform within its final arithmetic interval. Shown is the no-wrap case; there is also a wrap-around case (Fig.~\ref{fig:circular-overlap}).}
  \label{fig:overlap-main}
\end{figure}
\label{sec:dependent_samples_rloo}
The second estimator is the leave-one-out estimator used by RLOO. 
This approach is only unbiased when the baseline is independent of the samples, which is violated under dependent samples. 
Concretely, for each sample $y_i$, RLOO computes an advantage by subtracting the mean reward of the other group members from the raw reward $r(y_i)$,
\[
A_i = r(y_i) - \frac{1}{n-1}\sum_{j\ne i} r(y_j),
\]
and the resulting gradient $\frac1n\sum_i A_i\,s_\theta(y_i)$ is an unbiased
policy gradient where $s_\theta(y_i)$ is the score $\nabla_{\theta} \log \pi_\theta (y_i)$.
The unbiasedness of this estimator relies on independence in a subtle way: under i.i.d.\ draws the
baseline $\frac{1}{n-1}\sum_{j\ne i} r(y_j)$ is independent of $y_i$, and thus $s_i$, so it is mean-zero in expectation.
Under QMC, we do not have this independence and using dependent samples with RLOO introduces bias via the baseline.
Empirically, we find this has a negligible impact, but for completeness we describe how we might correct for this theoretically.

\paragraph{A pairwise characterization of RLOO that admits an importance sampling correction.}
We could take an importance sampling approach to correct for this following~\citet{dimitriev2021carms}. 
The key idea is to rewrite RLOO as a product-of-differences (PoD) estimator.
Proposition~\ref{prop:rloo_as_prod} shows that RLOO is an average of pairwise PoD terms,
\begin{align}
g_{\mathrm{RLOO}}
=
\frac{1}{n(n-1)}\sum_{i\ne j}
\tfrac12\bigl(r(y_i)-r(y_j)\bigr)\bigl(s_\theta(y_i)-s_\theta(y_j)\bigr),
\label{eq:rloo_pod}
\end{align}
each of which is an expectation over the pair $(y_i,y_j)$ with respect to the independent product measure $\pi_\theta\otimes\pi_\theta$. 

However, under a coupled sampler, the pair is drawn from the joint law $q_{ij}$ instead. 
The PoD estimator makes it clear that an importance sampling correction involving pairs of sequences can yield unbiased estimates, if we can evaluate $q_{ij}$ and the importance ratio is defined.
Concretely, reweighting each term in Equation~\ref{eq:rloo_pod} by
\[
\rho_{ij}(y_i,y_j)=\frac{\pi_\theta(y_i)\pi_\theta(y_j)}{q_{ij}(y_i,y_j)}
\]
maps each term's expectation back to the independent product measure; by linearity of expectation, the reweighted
average is therefore an unbiased RLOO gradient. 
The entire correction thus reduces computing the probability that the coupled sampler produces a given pair of sequences.

For the lattice sampler, the joint law can be computed explicitly. 
As Proposition~\ref{prop:stable-circular-overlap} establishes, this can be computed by considering intersections between intervals whose endpoints depend on the relative position of a uniform sample within its final arithmetic interval.
We visualize this in Figure~\ref{fig:overlap-main}.
Unfortunately, the coupled law $q_{ij}$ does not have full support over pairs of
sequences: intuitively, this is because the lattice has only a single degree of
freedom, the shared shift. 
This violates the support condition needed for importance sampling: the target law must be absolutely continuous with respect to the proposal. 
When this holds, the importance ratio has expectation 1.
Empirically, we observed that the ratio was substantially below 1, consistent with support mismatch.

\section{Theoretical Analysis: an upper bound on pass@$k$ for samplers with a target marginal}
\label{sec:theory}
\begin{figure}[t]
\centering
   \includegraphics[width=0.4\linewidth]{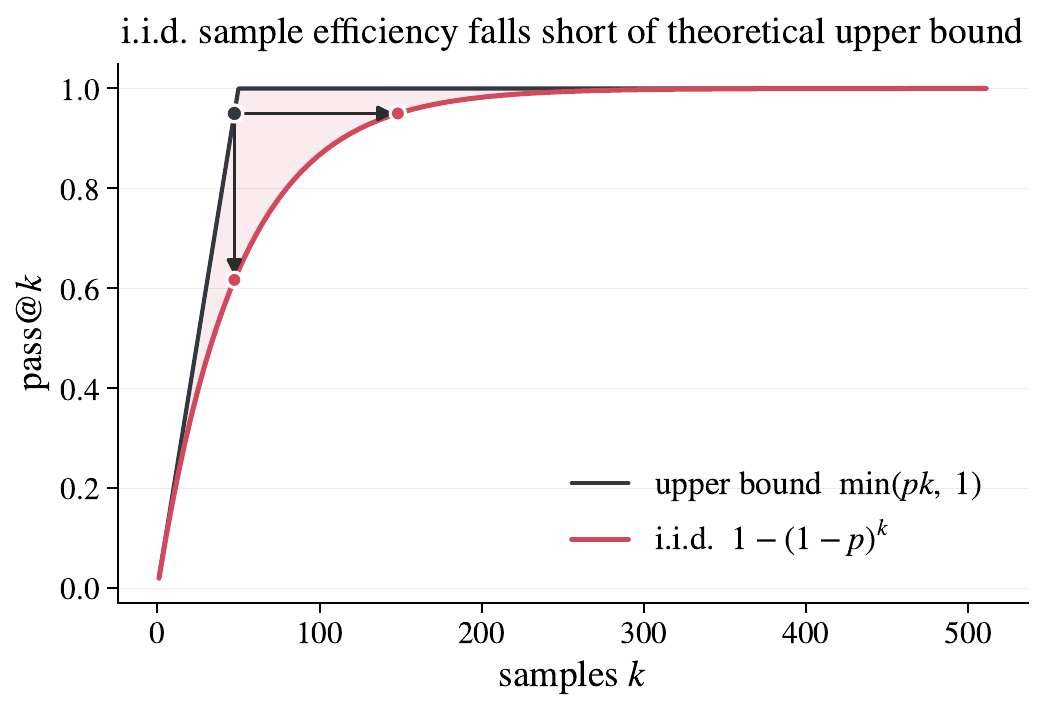}
\caption{\textbf{i.i.d. sampling falls short of the per-problem pass@$k$ upper bound.} For a problem with pass@1 probability $p$, the union bound gives a ceiling of $\min(pk, 1)$ (black) that places an upper bound on any sampler with target marginals.
The i.i.d. curve $1 - (1-p)^k$ (coral) lies strictly below this ceiling; the shaded gap represents the redundancy of independent sampling. 
Carefully constructed dependent samples can close the gap by being more spread out over the output space.}

\label{fig:pass_at_k_theory}
\end{figure}
Before evaluating \qmt empirically, we first ask how much room there is to improve beyond i.i.d sampling.
Intuitively, any sampler that respects the model's marginal distribution (\qmt included) can only rearrange the model's capability across $k$ rollouts; without training the model further, we cannot induce new capability. 
We formalize this as an upper bound on pass@$k$ in terms of pass@1, which gives us a ceiling against which to measure our gains empirically.

\begin{theorem}[Per-problem pass@$k$ upper bound]
\label{thm:pass-at-k-upper-bound}
Let $p$ denote the probability that a single sample solves
a fixed problem, i.e. its pass@1 probability. Let $P(k)$ denote the probability that at
least one of $k$ samples solves the problem. Then
\[
    P(k) \leq \min\{1, k p\}.
\]
\end{theorem}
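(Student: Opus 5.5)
The plan is a direct union-bound argument that uses only the marginal condition from Section~\ref{sec:setup}. Fix the problem and let $(\tau_1,\dots,\tau_k)$ be the batch drawn from an arbitrary joint law $\mu$ whose marginals satisfy $\mu_i = \pi_\theta$ for every $i$. Let $E_i$ denote the event that the $i$-th rollout solves the problem. Then $P(k) = \Pr_\mu\bigl(\bigcup_{i=1}^k E_i\bigr)$.

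First, I note that the marginal condition gives $\Pr_\mu(E_i) = \Pr_{\tau\sim\pi_\theta}(\tau \text{ solves the problem}) = p$ for every $i$, however the rollouts are coupled. This is the only place the sampler enters the argument. For QuasiMoTTo in particular, the condition holds because each QMC uniform is marginally $\mathrm{Unif}[0,1]$ and the arithmetic-coding map $\Phi$ is measure-preserving.

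Second, I apply Boole's inequality, which needs no independence assumption:
\[
P(k) \le \sum_{i=1}^k \Pr_\mu(E_i) = kp.
\]
Since $P(k)$ is a probability, we also have $P(k)\le 1$. Combining the two bounds gives $P(k)\le\min\{1,kp\}$.

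The argument has no real obstacle. The one point to make explicit is that $P(k)$ depends on the joint law $\mu$, while the bound depends on $\mu$ only through its marginals. This is what makes the bound a ceiling for every marginal-preserving sampler.

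It may also be worth remarking that the bound is attainable, which justifies calling it a ceiling. Take the lattice sampler with $p=1/k$, in the idealized case where the correct sequences occupy a single arithmetic-coding interval of length $p$. Exactly one point of the shifted lattice then lands in that interval, so $P(k)=1=kp$. More generally, $P(k)=\min\{1,kp\}$ whenever the correct mass forms one contiguous interval.
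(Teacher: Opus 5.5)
Your proposal is correct and matches the paper's proof. Each success event has probability $p$ by the marginal condition, Boole's inequality gives $P(k)\le kp$ with no independence assumption, and $P(k)\le 1$ trivially; your tightness remark for the lattice sampler with a contiguous correct region is also correct and is a concrete instance of the paper's ``disjoint events'' tightness paragraph.
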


\begin{proof}
Let $S_{j}$ be the event that the $j$-th sample solves the problem. 
By supposition, $\Pr(S_{j}) = p$ for each $j$. 
The event that the problem is solved by at
least one of the $k$ samples is given by $\bigcup_{j=1}^k S_{j}$ and by the union bound,
\[
    P(k)
    =
    \Pr\!\left(\bigcup_{j=1}^k S_{j}\right)
    \leq
    \sum_{j=1}^k \Pr(S_{j})
    =
    k p.
\]
We also must have $P(k) \leq 1$. Combining the
two bounds gives the desired result.
\end{proof}

As Figure~\ref{fig:pass_at_k_theory} shows, the gap between i.i.d. and the upper bound depends on the configuration: if the problem is too easy, then the model will sample a correct answer with probability 1 regardless of how it is sampled, and if the problem is too hard, both the i.i.d. and upper bound pass@$k$ vanish. The gap is maximized at $k=1/p$, i.e. when the problem is hard but feasible, where i.i.d. achieves a pass@$k$ of roughly $1-(1-\frac1k)^k\approx1-\frac1e=0.632$.

\paragraph{Tightness.} 
The bound is tight when the events $\{S_{j}\}_{j=1}^k$ are disjoint, i.e., when no two samples in the batch ever solve the problem in the same way simultaneously. 
Crucially, no sampling procedure that achieves this target pass@1 rate can do better than this upper bound, which places a theoretical limit on how good an entirely sampling-based, training-free method can be.

\section{Experiments}
We benchmark \qmt on four reasoning tasks chosen to understand the effect of the sampler: each is hard enough that coverage matters (i.i.d.\ does not trivially saturate), admits a verifiable reward, and uses a compact symbolic vocabulary, avoiding the semantic equivalence class issues that arise in certain natural language reasoning tasks.

\paragraph{Samplers.} 
We use \textbf{\qmt} to draw $k$ samples via the three approaches described in Section~\ref{sec:qmc}: lattice, stratified, and token-level Sobol.
We benchmark against
\textbf{i.i.d.} draws $k$ independent samples from the model and an \textbf{upper bound} that reflects the per-problem ceiling from
Theorem~\ref{thm:pass-at-k-upper-bound}. This characterizes the best pass@$k$
achievable by any sampler that preserves the LM's marginals. 
We estimate the per-problem pass@$k$ via the
bootstrap procedure of Section~\ref{sec:dyadic}. 
\paragraph{Tasks.}
In \textbf{Countdown} \citep{gandhi2024stream,gandhi2025cognitive,pan2025learning}, the model
is given a target number and a list of numbers and must combine them using
arithmetic operations ($+, -, \times, /$) to reach the target. 
In
\textbf{Maze} \citep{tajwar2026maximumlikelihoodreinforcementlearning}, the
model receives a 2D maze in text form and must output a sequence of moves
(Up/Down/Left/Right) from a start to an end position. In \textbf{Sudoku},
the model fills in a partially-completed $9 \times 9$ grid subject to the
standard constraints. In \textbf{1D-ARC}
\citep{xu2024llmsabstractionreasoningcorpus}, the model is shown a few
input/output pairs of one-dimensional grids and must infer and apply the
underlying transformation to a held-out input.
See the Appendix~\ref{sec:experiments_app} for representative
examples and solutions.

\paragraph{Models.}
For Countdown, Sudoku, and Maze, we use Qwen3.5-0.8B-Base. 
We observe that off-the-shelf LLMs perform poorly on these tasks: they struggle to adhere to format requirements (even instruction-tuned models) and hold a reasonable prior (e.g., they place zero mass on the correct solution). 
This makes it challenging to apply RL directly to these models~\citep{gandhi2025cognitive}.
Thus, for these three tasks, we perform supervised fine-tuning on 50,000 correct
question/answer pairs.
For 1D-ARC, we use Qwen3.5-2B-Base directly.
ARC inputs are inductive by nature --- containing multiple in-context examples --- so the base model already produces well-formatted outputs, and does not require any additional supervised fine-tuning. 

\paragraph{Evaluation.}
For each task we report pass@$k$ (\ie 1 if at least one of the $k$ attempts is correct),
and the pass@$k$ upper bound, averaged across all problems. 
To isolate the effect of the sampler, all
methods share the same model, prompt, and decoding temperature; only the joint
distribution of the underlying uniforms differs.
We use temperature $1.0$ and top-$p$ $0.95$
throughout, with a maximum completion length of $32$ tokens for Countdown and Maze, and $280$ tokens for Sudoku and 1D-ARC.

\subsection{Pass@$k$ results}
\label{sec:pass_at_k}
\paragraph{\qmt substantially outperforms i.i.d.\ and comes close to saturating the upper bound.}
In Figure~\ref{fig:pass_at_k},
we see that \qmt (lattice-based) sampler produces a significantly higher pass@$k$ than i.i.d.\ sampling at every value of $k$, across all four tasks.
We only present results for the lattice based sampler here, but we provide the pass@$k$ for all samplers in Figure~\ref{fig:pass_at_k_all_samplers} in the Appendix. 
Strikingly, the \qmt pass@$k$ curves track the upper bound closely; this means there is little room for \emph{any} training-free sampler (that preserves the marginals) to further improve performance.
This holds, to different degrees, across a diverse set of tasks with different solution geometries — arithmetic search (Countdown), spatial
planning (Maze), constraint satisfaction (Sudoku), and pattern induction (1D-ARC). 
Note that, although the upper bound from Section~\ref{sec:theory} is (piecewise) linear in $k$, it is a \emph{per-problem} upper bound and
the pass@$k$ curves in Figure~\ref{fig:pass_at_k} aggregate over problems with different accuracies.

\paragraph{\qmt sample efficiency.}
We can also ask how many fewer samples
\qmt needs to match a given i.i.d.\ pass rate; we present these results in Figure~\ref{fig:sample_efficiency}.
For the lattice-based sampler, \qmt reaches i.i.d.'s performance with a 25-47\% reduction in samples. 
Therefore, at deployment, we can use \qmt to reduce the rollout budget at matched quality or hold the budget fixed and spend the
freed FLOPs elsewhere.
The compute savings are essentially free since
the per-rollout overhead of \qmt is negligible (one extra uniform draw per batch
plus the inverse-CDF bookkeeping).
The other non-lattice QMC methods also improve sample-efficiency, although to a lesser extent.
\begin{figure}[t]
\centering
\includegraphics[width=1.0\linewidth]{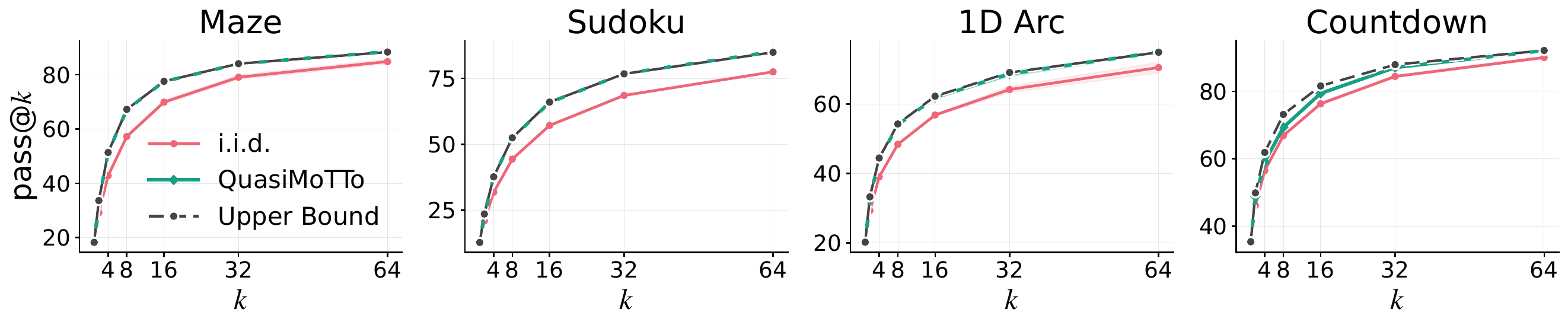}
\caption{\textbf{\qmt pass@$k$ analysis.}
\qmt (lattice) consistently
dominates i.i.d.\ sampling across all four reasoning benchmarks. 
Strikingly, \qmt closely tracks the pass@$k$ upper bound that no training-free sampler can exceed.
Note, we plot error bars $\pm 1$ SD but they are small enough that they are not visible in the plot above. 
Importantly, the upper bound curve aggregates over multiple problems, which is why it is not piecewise linear in $k$.
}
\label{fig:pass_at_k}
\end{figure}

\begin{figure}[t]
\centering
\includegraphics[width=\linewidth]{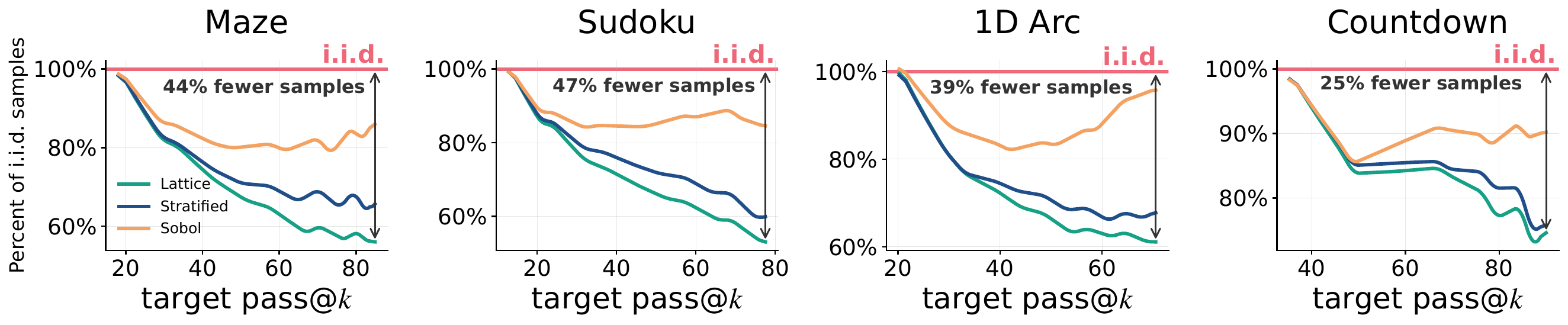}
\caption{\textbf{\qmt boosts sample efficiency.} \qmt can use 25\%-47\% fewer samples than i.i.d. sampling while achieving the same accuracy.
These performance gains hold across different QMC methods although lattice performs best.
}
\label{fig:sample_efficiency}
\end{figure}

\subsection{Policy-gradient based reinforcement learning}
\label{sec:rl-results}
\begin{figure}[t]
\centering
   \includegraphics[width=0.75\linewidth]{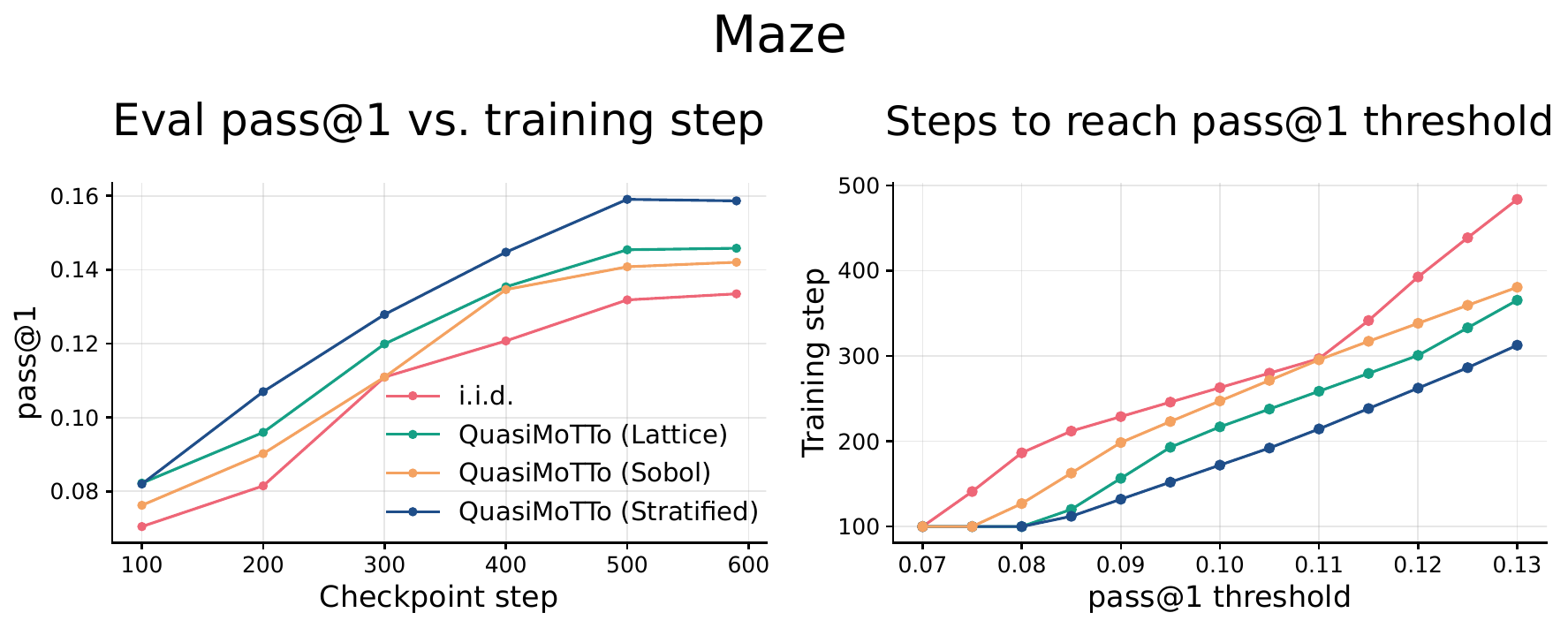}
   \includegraphics[width=0.75\linewidth]{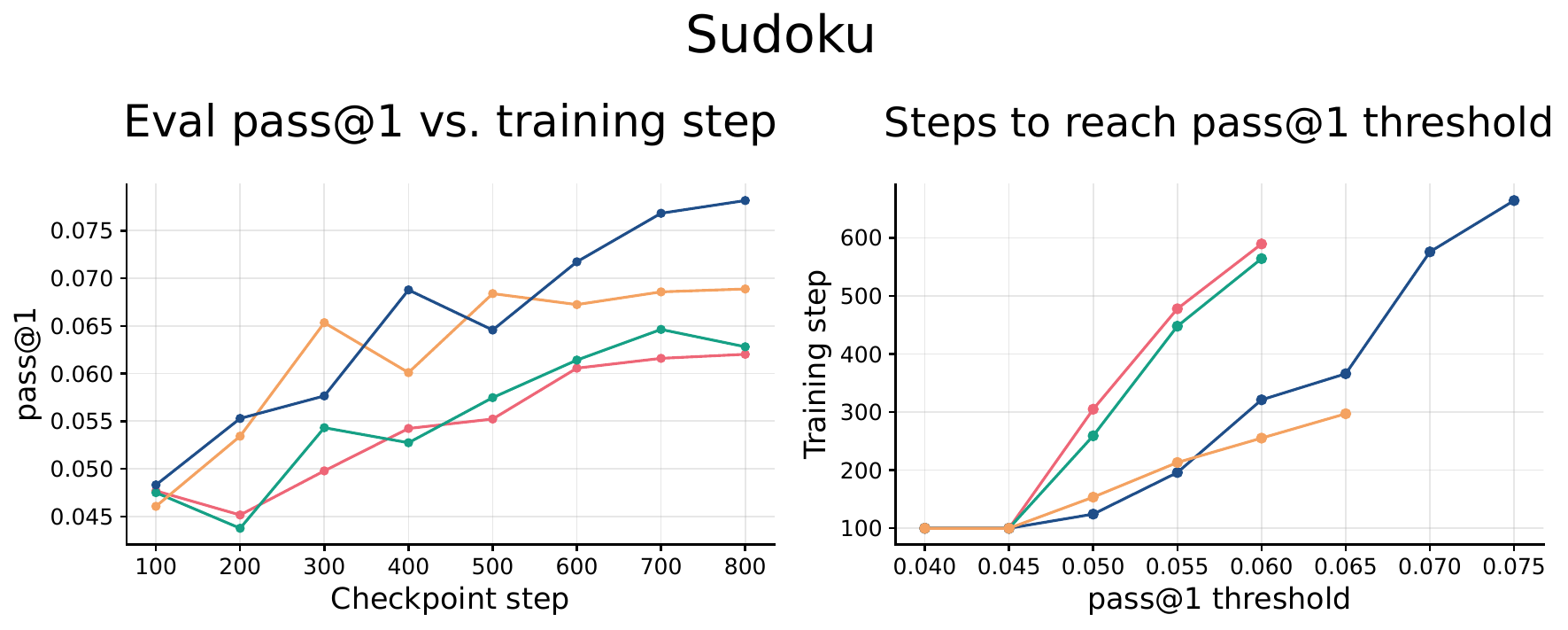}
\caption{\textbf{\qmt compute efficiency for RL}. 
We plot the pass@1 on evaluation set against the training step.
\qmt achieves the same pass@$1$ in fewer steps compared to i.i.d sampling.
}
\label{fig:rl_results}
\end{figure}
\begin{figure}[t]
\centering
   \includegraphics[width=0.65\linewidth]{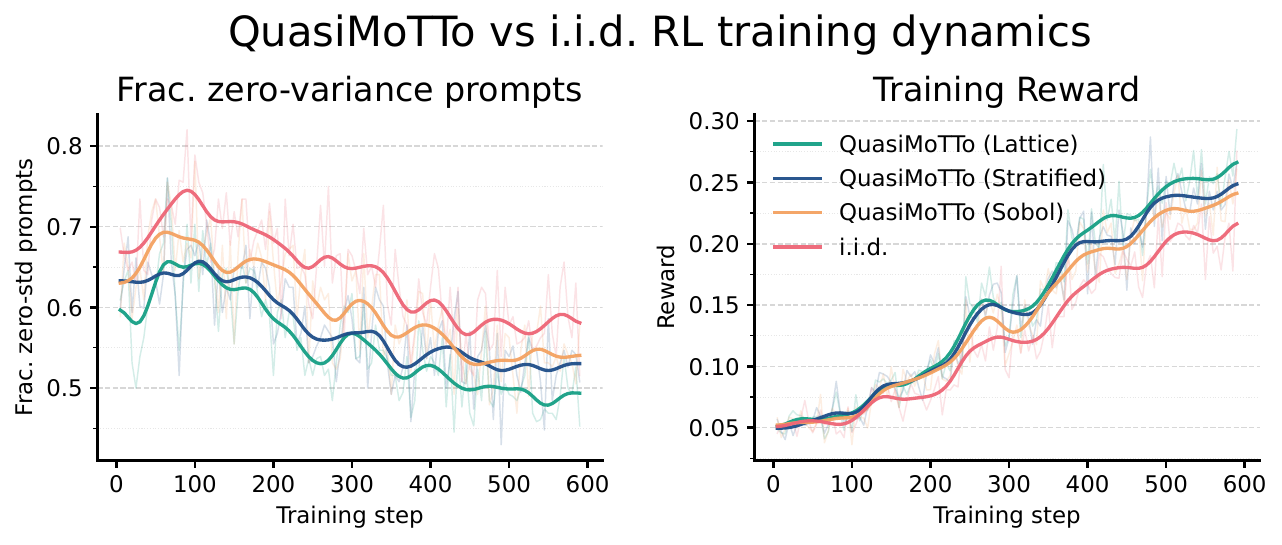}
\caption{\textbf{\qmt RL training dynamics}. 
\textbf{(left)}
QuasiMoTTo produces consistently fewer zero-variance groups (i.e., groups in which $G$
rollouts all succeed or all fail and contribute no learning signal)
throughout training because it samples responses with higher coverage.
\textbf{(right)}
This larger effective sample size causes \qmt's training reward to rise faster.
}
\label{fig:rl_training_dynamics}
\end{figure}

Since \qmt produces exact samples from the LM, we can use it as a \emph{drop-in replacement} for policy-gradient based reinforcement learning.
If \qmt can produce more diverse outcomes within a group, it can boost the effective sample size per gradient step and decrease the number of training steps required to achieve a target performance.

\paragraph{Setup.} We train Qwen3.5-0.8B-Base on Maze and Sudoku using GRPO~\citep{guo2025deepseekr1} to measure whether \qmt rollouts translate into faster policy improvement. 
As a baseline, we also train the model using an i.i.d.\ sampler.
We use a group size of $G = 8$, $64$ unique prompts per gradient step, sampling temperature
$\tau = 1.0$, and a max completion length of 256 tokens.
We perform 2 gradient steps per rollout batch (\ie, 2 off-policy updates).
We use sequence level off-policy importance sampling~\citep{zheng2025groupsequencepolicyoptimization}.
Note that, because we use a group baseline with dependent samples, we introduce some amount of bias; see Section~\ref{sec:dependent_samples_rloo}.

Experiments were run on 4 H100s/A100s.
We organize our analysis around two questions: (1) does using \qmt during RL
training translate into a compute-efficiency win (fewer steps to get the same reward), and (2) how do correlated samples change the training dynamics?
% \vspace{-1em}
\paragraph{GRPO background.}
We provide some basic background on GRPO.
Concretely, for a prompt $x$, the policy $\pi_{\theta}$ generates $G$ completions 
$y_1,\ldots,y_G$ and receives rewards
$r_1,\ldots,r_G$. 
GRPO forms a group-relative advantage for each completion by comparing its reward to the rewards of the other completions from the same prompt; the $i$-th completion gets advantage $A_i = r_i - \mathrm{mean}(r_1,\ldots,r_G)$. 
If all $G$ completions for a prompt receive the same reward, then the group has zero reward variance and contributes no policy-gradient signal. 
We call such a group a \emph{zero-variance group} when all rollouts are correct, or incorrect. 
Zero variance groups are a significant practical challenge: they increase the policy gradient variance, by reducing the number of prompts the LM gets a learning signal from in a batch (\ie, the effective sample size ~\citep{yu2025dapoopensourcellmreinforcement}).

\paragraph{Hard-problem pass@$k$ filtering.} In GRPO-style training, learning is primarily driven by rare but correct rollouts; questions that are too hard or easy result in more zero-variance groups, less learning signal, and increased noise~\citep{yu2025dapoopensourcellmreinforcement,goel2026speedinguprlhighleverage}. As established in Section~\ref{sec:theory}, we also expect \qmt samplers to exhibit the most improvement over i.i.d. in this hard-but-feasible regime. Thus, to better study the effects of \qmt on RL training, we filter out Maze and Sudoku problems which are too easy or too hard. We do this as a preprocessing step by generating 32 completions per problem with the base model checkpoint, and filtering out problems with an average pass rate $=0$ or $>\frac{1}{G}=0.125$.
 
\paragraph{\qmt is a compute efficiency win.}
In Figure~\ref{fig:rl_results}, we compare the evaluation set pass@$1$ for \qmt against i.i.d. sampling; for evaluation, we use \qmt sampler for both methods to isolate the benefits of using the QMC sampler at training time.
\qmt provides a consistent compute-efficiency win; we can achieve the same target pass@1 in 50\% fewer steps.
In the Appendix, we report error bars~(Figure~\ref{fig:rl_results_error_bars}).

\paragraph{\qmt sampling improves training dynamics by reducing zero-gradient groups.}
With i.i.d. sampling, the rollouts in a group frequently coincide, in which case the group-relative advantage collapses to zero and the
gradient update carries no signal. 
In Figure~\ref{fig:rl_training_dynamics}, we see that \qmt actively mitigates
this: \qmt encourages rollouts to occupy different regions of the sequence space.
We see this clearly reflected in a smaller percentage of zero-variance groups.
The result is faster improvement due to a larger effective sample size.
Interestingly, although dependent samples introduce some amount of bias through the group relative baseline (Section~\ref{sec:dependent_samples_rloo}), this is more than compensated by the increase in the effective batch size.

\subsection{Theoretical and empirical tradeoffs between samplers}
\label{sec:tradeoffs}
In this section we provide a high-level picture for how \qmt samplers (lattice, stratified, and Sobol) relate to each other and to i.i.d. sampling. The goal is to build intuition for why different samplers lead to the different pass@$k$ and RL results shown above.
\input{figure_tex/sampler_intuition}

\paragraph{Freedom vs. coverage.} Of the three 1D samplers, independent sampling has the most freedom: samples do not influence one other at all, and coverage guarantees are weak. As we move to stratified and then lattice sampling, the ``freedom" gets more restricted --- points repel one another more strongly --- and the coverage consequently increases (Figure~\ref{fig:sampler_intuition}). This explains why pass@k monotonically increases from i.i.d. to stratified to lattice in Section~\ref{sec:pass_at_k}, and why the proportion of zero-variance groups drops in Section~\ref{sec:rl-results}.

One way to quantify this tradeoff is through the \emph{pairwise mutual information} 
\[ I(U_i; U_j)=H(U_i)-H(U_i\mid U_j), \]
an information-theoretic quantity which measures how much information one gains about a random variable by conditioning on another variable. For independent samples this is trivially 0. For stratified sampling, conditioning on $U_j$ reduces the possible space of $U_i$ from $[0,1]$ to a region of length $\frac{k-1}k$, so the mutual information is
\[ -\int_0^1\log 1 \, \mathrm{d}u +\int_0^{\frac{k-1}k}\frac k{k-1}\log\frac k{k-1} \, \mathrm{d}u =\log\frac k{k-1}. \]
For lattice sampling, since conditioning on any point $U_j$ makes any other point $U_i$ deterministic, the distribution collapses to a point mass and the differential entropy $H(U_i\mid U_j)$ is $-\infty$.

We note that stratified empirically outperforms lattice when applied to RL (Section~\ref{sec:rl-results}), which is seemingly at odds with this freedom vs. coverage intuition. We hypothesize that this is a consequence of the RLOO bias mentioned in Section~\ref{sec:dependent_samples_rloo}: lattice has higher coverage but higher bias, which overall hurts training more than stratified.

\paragraph{Sequence-level vs. token-level coverage.} Unlike lattice/stratified sampling, Sobol sequences are defined over a high-dimensional space. The coverage applies locally at each token position rather than globally for the whole sequence. We would therefore expect Sobol to perform somewhat weaker than stratified or lattice, but still stronger than i.i.d., an intuition which holds in the pass@$k$ and RL experiments above.
\section{Related Work}
% \vspace{-1em}
\paragraph{Beam Search}
Beam search and its diverse variants~\citep{vijayakumar2018diverse} also boost diversity, but their outputs do not correspond to draws from a well-defined distribution, let alone the LM's, which precludes their use within procedures like RL (they would introduce substantial bias).
Stochastic beam
search~\citep{pmlr-v97-kool19a} reduces redundancy by drawing structured
sets of distinct sequences. 
However, because sampling without replacement
changes the sequence probabilities, unbiased estimation
with respect to the LM requires importance weight corrections.
This suffers from the conventional challenges of importance weighting such as high variance.

\paragraph{Exploration in RL}
Recent approaches modify the RL or SFT training objective to induce broader coverage over rollouts \citep{tajwar2026maximumlikelihoodreinforcementlearning,hamid2026polychromic,orney2026polyepotrainingexploratoryreasoning,chen2026rethinking, hamid2026spirallearningsearchaggregate}. 
Our approach is orthogonal: we intervene at the sampling stage rather than during training, leaving the objective untouched.

\paragraph{Efficient sampling}
Most work on efficient LLM inference lowers the cost of \emph{each} rollout.
IO-aware exact attention removes the memory bottleneck of softmax
attention~\citep{dao2022flashattention}, paged memory management raises serving
throughput~\citep{kwon2023vllm}, sparse attention drops the quadratic
scaling in context length~\citep{deepseekai2025deepseekv32pushingfrontieropen}, and learned KV cache compression methods reduce the size of the context~\citep{li2026neuralgarbagecollectionlearning, mao2026simplifiedsparseattentiongist}.
Orthogonally, one can reduce latency by restructuring a single chain of thought into subtasks executed in
parallel~\citep{mahankali2026divideandconquercotrlreducing,pan2025learning}. 
All of these make a rollout
cheaper or faster; \qmt instead reduces the \emph{number} of
rollouts needed to reach a target performance.

\section{Conclusions \& Limitations}
% \vspace{-1em}
We introduced QuasiMoTTo, an approach that combines randomized QMC and inverse CDF sampling to generate a dependent batch of LM rollouts for scaling inference compute.
Each rollout is an exact sample from the LM, but the batch as a whole covers the output space more evenly. This buys compute efficiency: 25–47\% fewer samples to match i.i.d. pass@$k$ on reasoning benchmarks, often saturating the union-bound ceiling that no marginal-preserving sampler can exceed; and 50\% fewer GRPO steps to hit a target pass@1, driven by a drop in zero-variance groups. 

Our experiments use 1–2B parameter models on tasks with shorter outputs for challenging symbolic reasoning tasks.
Extending QuasiMoTTo to long chain-of-thought reasoning requires defining a notion of coverage over semantic equivalence classes of solutions. 
It would also be interesting to see if \qmt can boost coverage and diversity in open-ended tasks where this is particularly important, like scientific discovery~\citep{heyueya2026giantsgenerativeinsightanticipation, li2024automatedstatisticalmodeldiscovery,novikov2025alphaevolvecodingagentscientific, gandhi2025boxinggymbenchmarkingprogressautomated}.
These are natural directions for future work.

\section{Acknowledgements}
We thank members of Cocolab and Dynamode Lab, particularly Ivy Zhang and Qizhong Zhang, for thoughtful feedback on this project.
We thank Omar Shaikh and Matt Jorke for insightful discussions.
We thank Jubayer Ibn Hamid and Suvir Mirchandi for helpful discussions.
Some of the experiments for this project was performed on the Marlowe cluster at Stanford University.
This work was supported in part by ONR Grant N00014-22-1-2110, NSF Grant 2205084, and the Stanford Institute for Human-Centered Artificial Intelligence (HAI). EBF is a Biohub, San Francisco, Investigator.
\newpage
%%%%%%%%%%%%%%%%%%%%%%%%%%%%%%%%%%%%%%%%%%%%%%%%%%%%%%%%%%%%

\bibliographystyle{plainnat}
\bibliography{references}

\newpage
\appendix

\section{Technical appendices and supplementary material}
\subsection{Experiment Details}
\label{sec:experiments_app}
\begin{figure}[ht]
\centering
\small
\begin{tabular}{@{}p{0.42\linewidth}p{0.52\linewidth}@{}}
\textbf{Maze} &
\textbf{Countdown} \\[2pt]
\begin{minipage}[t]{\linewidth}
\begin{verbatim}
*********
*E..*...*
*.***.***
*.......*
***.*.***
*...*...*
*S*.*****
*.*.....*
*********
\end{verbatim}
{\color{red}
\begin{verbatim}
URRUULLUU

\end{verbatim}
}
\end{minipage}
&
\begin{minipage}[t]{\linewidth}
\begin{verbatim}
Target: 89
Numbers: [57, 43, 11]
\end{verbatim}
{\color{red}
\begin{verbatim}
43+57-11
\end{verbatim}
}
\end{minipage}
\\[6pt]
\textbf{Sudoku} &
\textbf{1D-ARC} \\[2pt]
\begin{minipage}[t]{\linewidth}
\begin{verbatim}
8 4 5 2 _ _ 1 3 7
2 _ 9 _ 5 3 _ _ 6
6 _ _ _ _ _ _ _ 5
7 5 _ 9 8 6 _ 1 3
1 _ _ _ 4 _ _ _ 2
4 _ 6 _ _ _ _ _ 8
_ 2 _ _ 3 _ _ 8 9
_ 8 _ _ _ 1 6 2 _
9 6 _ 8 7 _ 3 5 1
\end{verbatim}
{\color{red}
\begin{verbatim}
8 4 5 2 6 9 1 3 7
2 1 9 7 5 3 8 4 6
6 7 3 4 1 8 2 9 5
7 5 2 9 8 6 4 1 3
1 9 8 3 4 7 5 6 2
4 3 6 1 2 5 9 7 8
5 2 1 6 3 4 7 8 9
3 8 7 5 9 1 6 2 4
9 6 4 8 7 2 3 5 1
\end{verbatim}
}
\end{minipage}
&
\begin{minipage}[t]{\linewidth}
\begin{verbatim}
Input: [0,0,5,0,0,0,0,0,0,0,0,5,0,0]
Output: [0,0,5,5,5,5,5,5,5,5,5,5,0,0]

Input: [0,0,7,0,0,0,0,0,0,0,0,7,0,0]
Output: [0,0,7,7,7,7,7,7,7,7,7,7,0,0]

Input: [2,0,0,0,0,0,0,0,0,2,0,0,0,0]
Output: [2,2,2,2,2,2,2,2,2,2,0,0,0,0]

Input: [7,0,0,0,7,0,0,0,0,0,0,0,0,0]
Output:
\end{verbatim}
{\color{red}
\begin{verbatim}
[7,7,7,7,7,0,0,0,0,0,0,0,0,0]
\end{verbatim}
}
\end{minipage}
\\
\end{tabular}
\caption{Representative examples from each of the four tasks.}
\label{fig:example_tasks}
\end{figure}

\begin{table}[h]
\centering
\caption{Hyperparameters for Maze and Sudoku experiments.}
\label{tab:hparams-maze-sudoku}
\begin{tabular}{@{}ll@{}}
\toprule
\textbf{Hyperparameter} & \textbf{Value} \\
\midrule
Optimizer                    & AdamW \\
Learning rate                & $1 \times 10^{-6}$ \\
$(\beta_1, \beta_2)$         & $(0.9,\ 0.95)$ \\
$\epsilon_{\text{Adam}}$     & $1 \times 10^{-8}$ \\
Weight decay                 & $0.01$ \\
Max gradient norm            & $1.0$ \\
KL coefficient ($\beta$)     & $0.01$ \\
PPO clip range ($\epsilon$)  & $0.2$ \\
Off-policy epochs            & $2$ \\
\bottomrule
\end{tabular}
\end{table}

\newpage
\subsection{Additional experimental results}
\begin{figure}[ht!]
\centering
\includegraphics[width=1.0\linewidth]{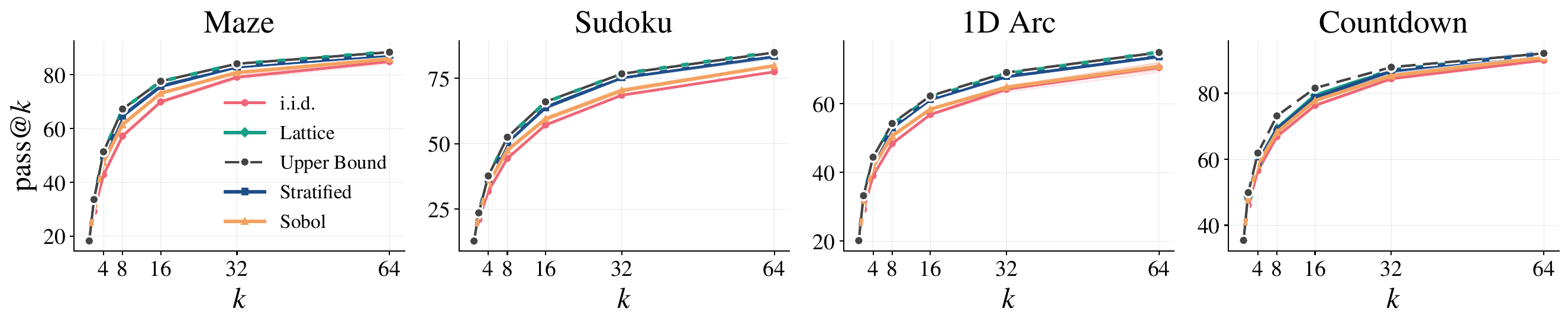}
\caption{\textbf{\qmt pass@$k$ analysis for all samplers.}
}
\label{fig:pass_at_k_all_samplers}
\end{figure}
\begin{figure}[t]
\centering
   \includegraphics[width=0.75\linewidth]{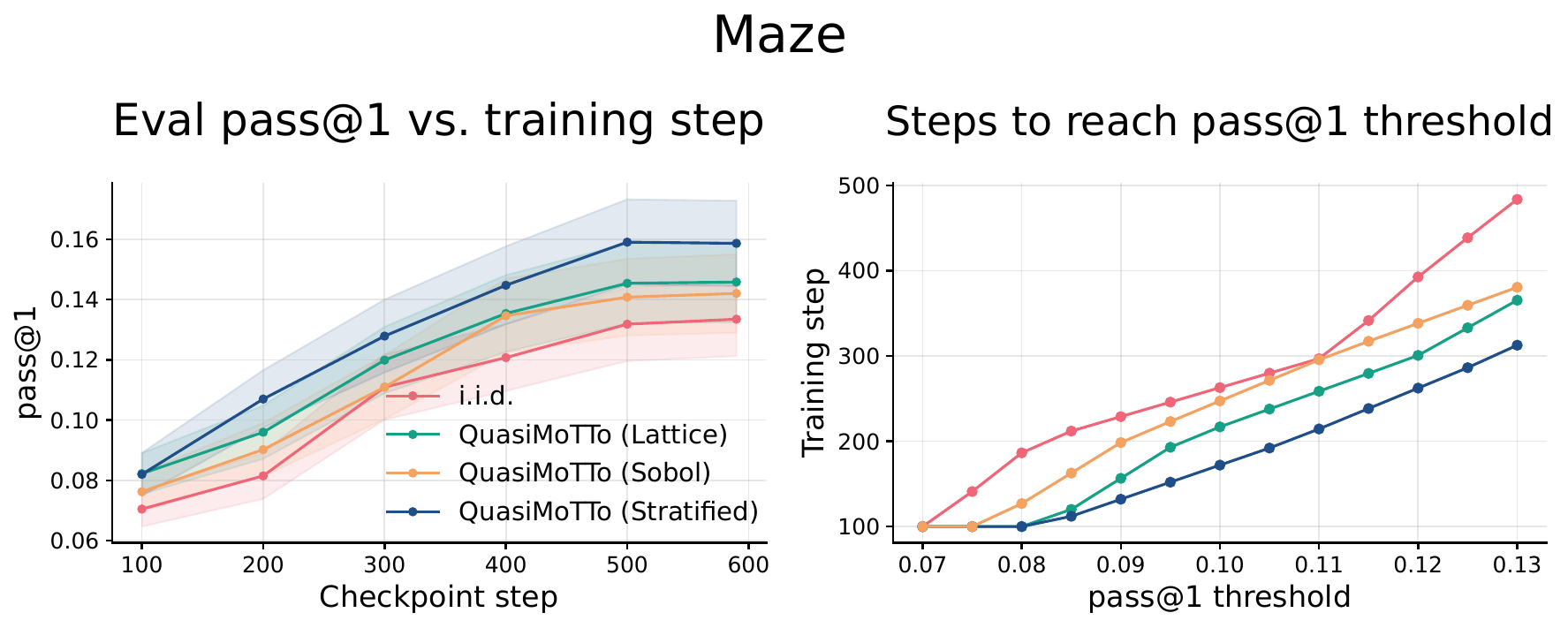}
   \includegraphics[width=0.75\linewidth]{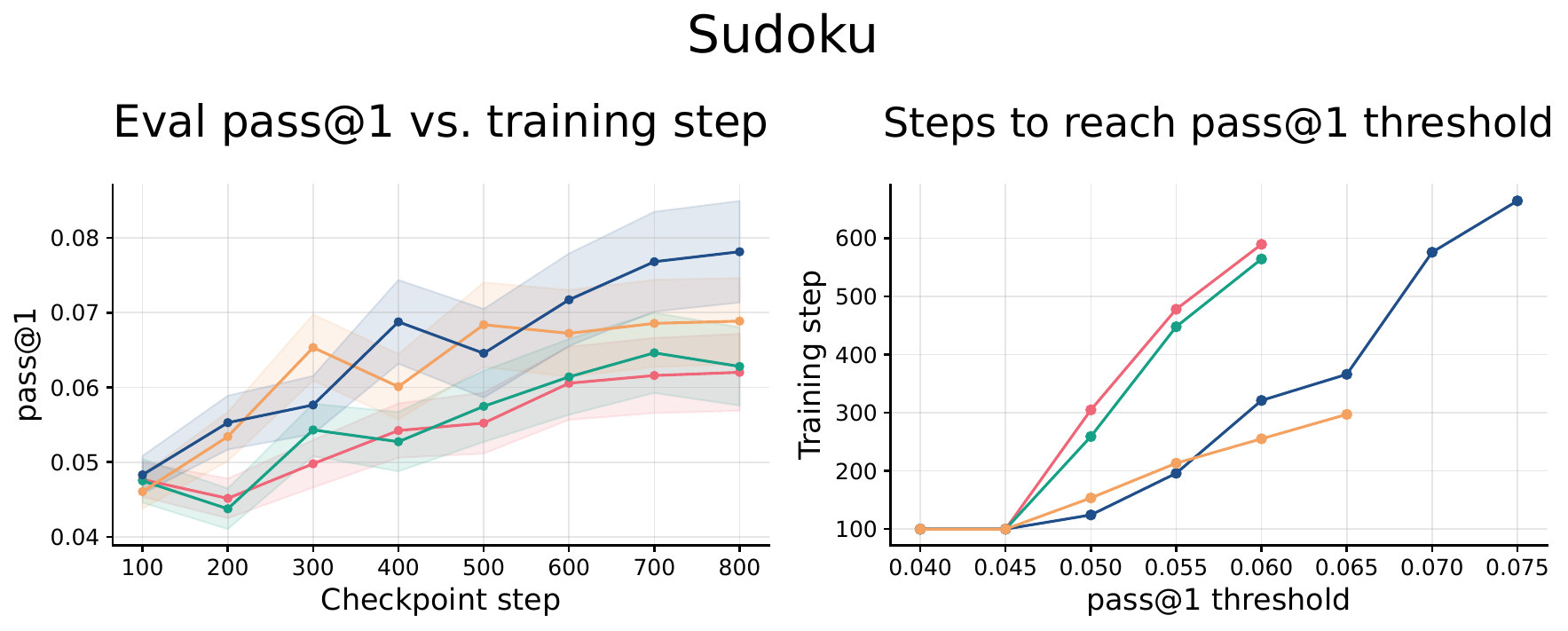}
\caption{\textbf{\qmt compute efficiency for RL}. 
We plot the pass@1 on evaluation set against the training step.
\qmt achieves the same pass@$1$ in fewer steps compared to i.i.d sampling.
Error bars are computed as follows: 
we estimate the accuracy $p_i$ for each of $n$ questions and report $\frac{\operatorname{std}(p)}{\sqrt{n}}$. 
}
\label{fig:rl_results_error_bars}
\end{figure}
\newpage
\subsection{Theoretical analyses}
\label{sec:theory_app}
\setcounter{theorem}{\numexpr\getrefnumber{thm:dyadic}-1}
\begin{theorem}[Correctness of dyadic bootstrap sampling]
Let \(k=2^L\), and let
\[
    u_i = \left(\Delta + \frac{i}{k}\right) \bmod 1,
    \qquad i=0,\ldots,k-1,
\]
where \(\Delta \sim \mathrm{Unif}([0,1])\). For any \(x \le L\), set \(m=k/2^x\). Then each stride-\(2^x\) subsequence
\[
    u^{(r)}_j := u_{r + 2^x j},
    \qquad j=0,\ldots,m-1,
\]
with \(r \in \{0,\ldots,2^x-1\}\), is distributed exactly as a fresh randomly shifted lattice of size \(m\). Consequently, each such subsequence gives an unbiased pass@\(m\) estimate under the randomized-lattice construction.
\label{ref:theorem_2}
\end{theorem}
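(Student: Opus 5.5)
The argument is a direct computation. First I will show that each stride-$2^x$ subsequence is an $m$-point grid shifted by a uniform offset modulo $1$. Then I will use the fact that decoding through $\Phi$ treats the subsequence exactly like a freshly drawn lattice.

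For fixed $r$ and $j$, note that $2^x j/k = j/m$, since $k = 2^x m$. Hence
\[
u^{(r)}_j = \Bigl(\Delta + \tfrac{r}{k} + \tfrac{j}{m}\Bigr) \bmod 1 = \Bigl(\Delta' + \tfrac{j}{m}\Bigr) \bmod 1, \qquad \Delta' := \Bigl(\Delta + \tfrac{r}{k}\Bigr) \bmod 1 .
\]
The second equality holds because reduction mod $1$ commutes with addition: $((a \bmod 1) + b) \bmod 1 = (a+b) \bmod 1$. So the subsequence is the deterministic grid $\{0, 1/m, \ldots, (m-1)/m\}$ rotated by the single random offset $\Delta'$.

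The key step is to show $\Delta' \sim \mathrm{Unif}[0,1]$. This is the circle-rotation invariance already used in the Lattice paragraph of Section~\ref{sec:primer}: Lebesgue measure on the circle $\mathbb{R}/\mathbb{Z}$ is invariant under the deterministic rotation by $r/k$. Concretely, the map $\delta \mapsto (\delta + r/k) \bmod 1$ is a bijection of $[0,1)$ that splits into two translations, on $[0, 1 - r/k)$ and on $[1 - r/k, 1)$. Each piece preserves length, so preimages of intervals have the same length. The whole vector $(u^{(r)}_0, \ldots, u^{(r)}_{m-1})$ is the fixed measurable function $g(\delta) = ((\delta + j/m) \bmod 1)_{j}$ applied to $\Delta'$. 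The lattice of size $m$ is the same $g$ applied to a uniform $\Delta$. Since $\Delta'$ and $\Delta$ have the same law, the two vectors have the same joint law. This is the exact distributional equality the statement asks for.

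For the unbiasedness claim, let $Z^{(r)} = \max_j \mathbf{1}[\Phi(u^{(r)}_j) \text{ correct}]$, where $\Phi$ is the arithmetic-coding map of Section~\ref{sec:inverse-cdf}. This is a fixed function of the joint vector, so its expectation equals the pass@$m$ probability of a fresh $m$-point randomized lattice. Averaging $Z^{(r)}$ over $r \in \{0, \ldots, 2^x - 1\}$ keeps the estimator unbiased by linearity of expectation, even though the $Z^{(r)}$ are dependent.

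I expect no real obstacle. The only care needed is the mod-$1$ bookkeeping and stating that we are comparing joint distributions, not just marginals. The joint equality is immediate because every coordinate is driven by the single shift $\Delta'$.
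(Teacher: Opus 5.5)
Your proof is correct and follows the same route as the paper. You rewrite $u_{r+2^x j}$ as $(\Delta^{(r)} + j/m) \bmod 1$ with $\Delta^{(r)} = (\Delta + r/k) \bmod 1$, note that a deterministic rotation of a uniform shift is still uniform, and conclude that the subsequence is a fresh $m$-point randomized lattice; your explicit joint-law argument (the vector as a fixed function of the single shift) and your linearity argument for averaging over $r$ make precise steps that the paper's proof leaves implicit.
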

\begin{proof}
Let \(s=2^x\), so \(m=k/s\). For any offset \(r\),
\[
    u_{r+sj}
    =
    \left(\Delta + \frac{r+sj}{k}\right)\bmod 1
    =
    \left(\Delta + \frac{r}{k} + \frac{j}{m}\right)\bmod 1.
\]
The last equality substitutes $\frac{1}{m}$ for $\frac{s}{k}$.
Define
\[
    \Delta^{(r)} := \left(\Delta + \frac{r}{k}\right)\bmod 1.
\]

For a fixed lattice subsequence, $r$ and $k$, are constants, so $\frac{r}{k}$ is a deterministic shift. 
Since adding a deterministic constant modulo 1 preserves the uniform distribution \(\Delta^{(r)}\) is also uniform on \([0,1]\). 
Finally substituting \(\Delta^{(r)}\)
and noting that adding a constant before reducing modulo \(1\) is the same as
first reducing the shifted quantity modulo \(1\), we obtain
\[
    u_{r+sj}
    =
    \left(\Delta^{(r)} + \frac{j}{m}\right)\bmod 1,
\]
which is exactly an \(m\)-point lattice with a uniform random shift.
Therefore any pass@\(m\) estimator computed on this subsequence has the same expectation as the estimator computed on a fresh \(m\)-point randomized lattice.
\end{proof}

\begin{theorem}[Bootstrap estimator for stratified sampling]
    Following the notation above, let $\{u_i\}_{i=0}^{k-1}$ be a set of stratified samples
    \[ u_i\sim\mathrm{Unif}[\tfrac ik, \tfrac{i+1}k],\qquad i=0,\ldots,k-1. \]
    Then, to obtain a bootstrapped stratified sample for $m=k/2^x$, we can partition $u_i$'s into contiguous blocks of size $k/m=2^x$
    \[ \{u_0,\ldots,u_{k/m-1}\}\cup\{u_{k/m},\ldots,u_{2k/m-1}\}\cup\cdots\cup\{u_{(m-1)k/m},\ldots,u_{k-1}\} \]
    and sample once from each partition
    \[ u_i'\sim\{u_{ik/m},\ldots,u_{(i+1)k/m-1}\} \]
    so that $\{u_i'\}_{i=0}^{m-1}$ is distributed according to the stratified rule for $m$ samples, i.e., $u_i\sim\mathrm{Unif}[\tfrac im, \tfrac{i+1}m]$.
\end{theorem}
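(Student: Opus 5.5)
The proof conditions on which element of each block is selected and uses independence across strata. Write $s=2^x=k/m$. Block $i\in\{0,\dots,m-1\}$ is $B_i=\{u_{is},\dots,u_{(i+1)s-1}\}$, and the strata of its members are $[\tfrac{is+t}{k},\tfrac{is+t+1}{k}]$ for $t=0,\dots,s-1$. Their union is exactly $[\tfrac{i}{m},\tfrac{i+1}{m}]$, because $\tfrac{is}{k}=\tfrac{i}{m}$ and $\tfrac{(i+1)s}{k}=\tfrac{i+1}{m}$. We formalize the selection as follows: draw indices $T_0,\dots,T_{m-1}$ i.i.d.\ uniform on $\{0,\dots,s-1\}$, independent of the $u$'s, and set $u_i'=u_{is+T_i}$.

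\emph{Marginal of each $u_i'$.} Fix a block $i$ and condition on $T_i=t$. Since $T_i$ is independent of the $u$'s, $u_i'$ is then distributed as $\mathrm{Unif}[\tfrac{is+t}{k},\tfrac{is+t+1}{k}]$. Averaging over $t$ with weights $1/s$ gives the density
\[
\sum_{t=0}^{s-1}\tfrac{1}{s}\cdot k\,\mathbf{1}\!\left[v\in[\tfrac{is+t}{k},\tfrac{is+t+1}{k}]\right]=m\,\mathbf{1}\!\left[v\in[\tfrac im,\tfrac{i+1}m]\right]
\]
almost everywhere, using $k/s=m$. This is the density of $\mathrm{Unif}[\tfrac im,\tfrac{i+1}m]$.

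\emph{Joint independence.} In the stratified construction the $u_i$ are drawn independently across strata. Distinct blocks use disjoint sets of the original samples, and the selectors $T_i$ are independent of everything else. So $(T_i,B_i)_{i}$ are mutually independent across $i$, and each $u_i'$ is a measurable function of $(T_i,B_i)$. Hence $u_0',\dots,u_{m-1}'$ are independent with the marginals above, which is exactly the stratified rule for $m$ points.

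\emph{Permutation step.} The paper's stratified sampler also applies a uniformly random permutation. Applying an independent uniform permutation of size $m$ to $(u_i')$ reproduces that step, so the full $m$-point stratified law is matched. Unbiasedness of the resulting pass@$m$ estimate then follows as in Theorem~\ref{thm:dyadic}, since every statistic of the subsample has the same expectation as on a fresh $m$-point stratified batch.

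The main obstacle is making the phrase ``sample once from each partition'' precise. The selection must be independent of the values and independent across blocks; any data-dependent choice, such as picking the smallest element in a block, would break the uniform mixture. I would state this formalization explicitly. Averaging over all $s^m$ choices (or over the $s$ aligned offsets, analogous to the lattice stride) then still yields an unbiased estimator by linearity.
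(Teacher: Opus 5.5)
Your proof of the statement is correct, and it supplies an argument the paper does not actually give. The paper only says the proof ``follows analogously'' to the lattice case, whose one-line argument is shift invariance of a uniformly shifted lattice. That invariance has no counterpart for stratified sampling.

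What makes the stratified bootstrap work are the two ingredients you isolate. First, a uniform mixture of the $s$ sub-stratum uniforms in block $i$ is $\mathrm{Unif}[\tfrac im,\tfrac{i+1}m]$. Second, the selections are independent across blocks, so the $u_i'$ inherit independence from disjoint sets of independent strata. Your explicit formalization with i.i.d.\ selectors $T_i$, independent of the $u$'s, is exactly the hypothesis that the paper's phrase ``sample once from each partition'' leaves implicit.

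Your closing parenthetical, however, is wrong. Suppose you average over the $s$ aligned offsets, i.e.\ take $T_0=\cdots=T_{m-1}=t$ and average over $t$. This is the literal stride analogue of the lattice bootstrap, and it does \emph{not} give an unbiased pass@$m$ estimator in general.

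With a shared offset, each $u_i'$ still has the correct marginal. But the blocks are now coupled through $t$. That breaks exactly the cross-block independence your own proof relies on, and pass@$m$ depends on the joint law, not just the marginals.

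Concretely, take $k=4$, $m=s=2$, and suppose the correct completions occupy $C=[0,\tfrac14)\cup[\tfrac34,1)$. The aligned pairs $(u_0,u_2)$ and $(u_1,u_3)$ each almost surely contain a point of $C$ (namely $u_0$, resp.\ $u_3$), so the aligned average is identically $1$. The $2$-point stratified pass@$2$ is $1-\tfrac12\cdot\tfrac12=\tfrac34$, and the full average over all $s^m=4$ choices does give $\tfrac34$.

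So drop the parenthetical. Keep only independent selection per block, or the average over all $s^m$ combinations.
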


\begin{theorem}[Bootstrap estimator for Sobol sampling]
    Let $\{u_i\}_{i=0}^{k-1}$ be a length $k$ Sobol sequence in dimension $n$
    \[ u_i\in[0,1]^n. \]
    Sobol sequences of length $2^L$ are constructed such that any prefix of length $2^x<2^L$ is itself a valid Sobol sequence. So, to obtain an unbiased pass@$m$ estimate, we can simply take contiguous blocks of $k/m$:
    \[ u_j^{(r)}:=u_{rm+j},\qquad j=0,\ldots,m-1, \]
    which gives bootstrapped samples for $r\in\{0,\ldots,2^x-1\}$.
\end{theorem}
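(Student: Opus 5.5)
The plan is to mirror the proof of Theorem~\ref{thm:dyadic}: show that each contiguous block is the first block of the sequence shifted by a deterministic amount, and then absorb that shift into the randomization. Write $m=2^a$ with $a=L-x$. Recall that an (unscrambled) Sobol sequence in base $2$ is a \emph{digital} sequence. For index $i$ with binary digit vector $\vec{i}=(i_0,i_1,\dots)$, coordinate $\ell$ of the point is
\[
  x_i^{(\ell)} \;=\; \sum_{b\ge 1} \bigl(C_\ell \vec{i}\bigr)_b\, 2^{-b},
\]
where $C_\ell$ is a generator matrix over $\mathbb{F}_2$. The key structural fact is that $i\mapsto x_i$ is linear over $\mathbb{F}_2$ in the digits of $i$ (with $\oplus$ denoting digitwise XOR of binary expansions). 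This is also what underlies the prefix property quoted in the statement.

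\textbf{Step 1 (blocks are XOR-translates of the prefix).} For $0\le j<m=2^a$ and $r\in\{0,\dots,2^x-1\}$, the binary digits of $rm$ occupy positions $\ge a$, and those of $j$ occupy positions $<a$. Hence $\vec{(rm+j)}=\vec{rm}\oplus\vec{j}$ with no carries. By linearity of each $C_\ell$,
\[
  x_{rm+j} \;=\; x_j \oplus x_{rm}, \qquad j=0,\dots,m-1 .
\]
So block $r$ is exactly the length-$m$ prefix $\{x_j\}_{j<m}$ XOR-shifted by the fixed vector $x_{rm}$. This is the Sobol analogue of the identity $u_{r+sj}=(\Delta+\tfrac rk+\tfrac jm)\bmod 1$.

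\textbf{Step 2 (absorb the shift into the randomization).} In the randomized version actually used (e.g.\ a random digital shift $D$, uniform on the binary digit space), the observed points are $u_i=x_i\oplus D$. Then
\[
  u_{rm+j} \;=\; x_j\oplus\bigl(x_{rm}\oplus D\bigr) \;=\; x_j\oplus D^{(r)} ,
\]
and $D^{(r)}:=x_{rm}\oplus D$ is again uniformly distributed, since XOR by a constant is a measure-preserving bijection. Therefore block $r$ has exactly the law of a fresh randomized $m$-point Sobol set, namely the length-$m$ prefix with an independent uniform digital shift. Any pass@$m$ statistic computed on it therefore has the same expectation as on a fresh draw, and so it is unbiased.

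\textbf{Main obstacle.} The step I expect to need care is matching the argument to the specific randomization used in \texttt{SobolEngine}, which applies linear matrix scrambling plus a digital shift rather than a pure digital shift. For that case I would first check that scrambling acts by left-multiplying each $C_\ell$ by a random lower-triangular matrix $M_\ell$. Since $M_\ell C_\ell$ is still linear, Step 1 goes through unchanged, with $x_{rm}$ now random but independent of $D$ and measurable with respect to $M$. Step 2 then holds conditionally on $M$, because $D$ stays uniform after XOR with anything independent of it.

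If instead one uses full Owen nested scrambling, I would replace Step 2 by the observation that Owen scrambling composed with a fixed digital XOR is equal in distribution to Owen scrambling. The reason is that, at every node of the tree, flipping a uniform random bit by a fixed bit leaves it uniform, which gives the same invariance. Finally, I would note that these claims concern blocks aligned at multiples of $m$ with $m$ a power of two. The carry-free decomposition in Step 1 fails for unaligned or non-dyadic blocks, which is why the statement restricts to $r m$ offsets.
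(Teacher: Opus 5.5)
Your proposal is correct and follows the route the paper intends. The paper only says the Sobol case goes ``analogously'' to the lattice proof: each subsample is a deterministic translate of a base set, and the randomization absorbs that translate. You make this precise by replacing the mod-$1$ shift with the digital XOR shift $x_{rm+j}=x_j\oplus x_{rm}$, which comes from $\mathbb{F}_2$-linearity of the generator matrices, and this also fills a step the paper leaves implicit: the prefix property cited in the statement only covers the block $r=0$, whereas your coset identity together with invariance of the scrambling (digital shift, linear matrix scrambling, or Owen) gives exact equality in law for every block $r$.
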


The proofs for the above two bootstrap estimators follow analogously to the proof above for the dyadic estimator for lattice sampling.

\setcounter{proposition}{\numexpr\getrefnumber{prop:rloo_as_pod}}
\begin{proposition}[Characterization of RLOO as a product-of-differences estimator]
\label{prop:rloo_as_prod}

Let \(y_1,\dots,y_n\) be sampled trajectories from a policy
\(\pi_\theta(y)\). Let \(r(y_i)\in\mathbb R\) denote the reward of trajectory
\(y_i\), and define the score
\[
s_\theta(y_i) := \nabla_\theta \log \pi_\theta(y_i).
\]
The leave-one-out policy-gradient estimator
\[
g_{\mathrm{RLOO}}(\theta)
=
\frac{1}{n}
\sum_{i=1}^n
\left(
r(y_i)
-
\frac{1}{n-1}\sum_{j\ne i} r(y_j)
\right)
s_\theta(y_i)
\]
can be written equivalently as
\[
g_{\mathrm{RLOO}}(\theta)
=
\frac{1}{n(n-1)}
\sum_{i\ne j}
\frac{1}{2}
\left(r(y_i)-r(y_j)\right)
\left(s_\theta(y_i)-s_\theta(y_j)\right).
\]
\end{proposition}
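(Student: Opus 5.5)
The identity is purely algebraic, so the plan is to expand the right-hand side and match it term by term with the left. Write $r_i := r(y_i)$ and $s_i := s_\theta(y_i)$. The symmetric summand expands as
\[
\tfrac12 (r_i - r_j)(s_i - s_j) = \tfrac12\bigl(r_i s_i + r_j s_j - r_j s_i - r_i s_j\bigr).
\]

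Sum this over ordered pairs $i\ne j$, and treat the four terms separately, exploiting the symmetry $i\leftrightarrow j$ of the index set.

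\textbf{Diagonal-type terms.} In $\sum_{i\ne j} r_i s_i$, each fixed $i$ appears with $n-1$ choices of $j$, so the sum equals $(n-1)\sum_i r_i s_i$. By relabeling, $\sum_{i\ne j} r_j s_j$ equals the same quantity.

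\textbf{Cross terms.} The sums $\sum_{i\ne j} r_j s_i$ and $\sum_{i\ne j} r_i s_j$ coincide after swapping the names of $i$ and $j$. Each equals $\sum_i s_i \sum_{j\ne i} r_j$.

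Combining the four terms, the factor $\tfrac12$ cancels the doubling from the symmetry, and the double sum becomes
\[
(n-1)\sum_i r_i s_i - \sum_i s_i \sum_{j\ne i} r_j.
\]
Dividing by $n(n-1)$ gives
\[
\frac1n\sum_i\Bigl(r_i - \frac{1}{n-1}\sum_{j\ne i} r_j\Bigr)s_i,
\]
which is exactly $g_{\mathrm{RLOO}}(\theta)$.

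Nothing about the sampling distribution is used, so the identity holds pathwise for any joint law of $(y_1,\dots,y_n)$. This is the property the importance-sampling discussion in Section~\ref{sec:dependent_samples_rloo} relies on.

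The only step requiring care is the bookkeeping of ordered versus unordered pairs. The sum runs over ordered pairs, so each unordered pair is counted twice, and it is precisely the $\tfrac12$ that compensates. Checking the case $n=2$ directly gives a quick confirmation that the constants $\tfrac{1}{n(n-1)}$ and $\tfrac12$ are right.
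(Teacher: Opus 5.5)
Your proof is correct and is essentially the paper's argument run in the opposite direction. The paper starts from the leave-one-out form, rewrites $\frac1n\sum_i r_i s_i$ as $\frac{1}{n(n-1)}\sum_{i\ne j} r_i s_i$, symmetrizes by relabeling $i\leftrightarrow j$, and factors into $(r_i-r_j)(s_i-s_j)$; you instead expand the product and collapse the four sums using the same two facts, namely the $(n-1)$-fold counting of the diagonal terms and the $i\leftrightarrow j$ relabeling.
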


\begin{proof}
Starting from the leave-one-out estimator,
\[
g_{\mathrm{RLOO}}(\theta)
=
\frac{1}{n}
\sum_{i=1}^n
\left(
r(y_i)
-
\frac{1}{n-1}\sum_{j\ne i} r(y_j)
\right)
s_\theta(y_i).
\]
Expanding,
\[
g_{\mathrm{RLOO}}(\theta)
=
\frac{1}{n}
\sum_{i=1}^n r(y_i)s_\theta(y_i)
-
\frac{1}{n(n-1)}
\sum_{i=1}^n
\sum_{j\ne i}
r(y_j)s_\theta(y_i).
\]

We first make the left and right term symmetric by introducing dummy indices that allow us to re-write the first term as a pairwise sum:
\[
\frac{1}{n}
\sum_{i=1}^n r(y_i)s_\theta(y_i)
=
\frac{1}{n}
\sum_{i=1}^n \frac{1}{n-1} \sum_{j \neq i} r(y_i)s_\theta(y_i)
=
\frac{1}{n(n-1)}
\sum_{i\ne j}
r(y_i)s_\theta(y_i).
\]
Therefore,
\[
g_{\mathrm{RLOO}}(\theta)
=
\frac{1}{n(n-1)}
\sum_{i\ne j}
\left(
r(y_i)s_\theta(y_i)
-
r(y_j)s_\theta(y_i)
\right).
\]

Since we can simply relabel $i$ with $j$, we also have
\[
\sum_{i\ne j}
\left(
r(y_i)s_\theta(y_i)
-
r(y_j)s_\theta(y_i)
\right)
=
\sum_{i\ne j}
\left(
r(y_j)s_\theta(y_j)
-
r(y_i)s_\theta(y_j)
\right).
\]
Therefore, we can write the RLOO estimator as
\[
g_{\mathrm{RLOO}}(\theta)
=
\frac{1}{n(n-1)}
\sum_{i\ne j}
\frac{1}{2}
\left[
r(y_i)s_\theta(y_i)
-
r(y_j)s_\theta(y_i)
+
r(y_j)s_\theta(y_j)
-
r(y_i)s_\theta(y_j)
\right].
\]
Finally, factoring we have
\[
g_{\mathrm{RLOO}}(\theta)
=
\frac{1}{n(n-1)}
\sum_{i\ne j}
\frac{1}{2}
\left(r(y_i)-r(y_j)\right)
\left(s_\theta(y_i)-s_\theta(y_j)\right).
\]
as desired.
\end{proof}

\subsubsection{Computing pair probability under the lattice QMC construction.}

\paragraph{Preliminaries.}
Let \(\mathcal V^\ast\) denote the set of finite token sequences. Arithmetic
sampling defines a deterministic decoder
\[
\Dec_0:[0,1)\to\mathcal V^\ast,
\]
where \(\Dec_0(u)\) is the sequence obtained by running arithmetic decoding
with latent \(u\in[0,1)\). For each sequence \(y\in\mathcal V^\ast\), define
its arithmetic interval
\[
I_0(y):=\{u\in[0,1):\Dec_0(u)=y\}.
\]
Equivalently,
\[
I_0(y)=[L(y),L(y)+W(y))\subset[0,1),
\]
where
\[
W(y)=\pi_\theta(y)=\prod_{t=1}^{T}\pi_\theta(y_t\mid y_{<t}).
\]

For the pair-probability calculation, it will be convenient to identify
\([0,1)\) with the unit circle
\[
\mathbb S^1=\mathbb R/\mathbb Z.
\]
Under this identification, addition is modulo one, and \(I_0(y)\) is viewed as
a subset \(I(y)\subset\mathbb \mathbb S^1\). We will write the resulting 
decoder simply as
\[
\Dec:\mathbb S^1 \to\mathcal V^\ast.
\]

\begin{lemma}[Pair probability as a circular overlap]
\label{lem:pair-prob-circular-overlap}
Let \(b\sim\mathrm{Unif}(\mathbb S^1)\), and suppose the lattice QMC latents are
\[
u_k=b+\alpha_k \pmod 1,
\]
where \(\alpha_k\in\mathbb S^1\) is deterministic. For \(k\in\{i,j\}\), let
\(y_k\in\mathcal V^\ast\) be a sequence of interest, and define
\[
I_k:=I(y_k).
\]
Define the shifted interval
\[
J_k
:=
I_k-\alpha_k
:=
\{b\in\mathbb S^1:\ b+\alpha_k\in I_k\}.
\]
Then
\[
\begin{aligned}
q_{ij}(y_i,y_j)
&:=
\Pr\!\left(
\Dec(b+\alpha_i)=y_i,\;
\Dec(b+\alpha_j)=y_j
\right)= \lambda_{\mathbb S^1}(J_i\cap J_j),
\end{aligned}
\]
where \(\lambda_{\mathbb S^1}\) denotes normalized Lebesgue measure on
\(\mathbb S^1\).
\end{lemma}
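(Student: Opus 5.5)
The plan is a direct unwinding of definitions followed by a single invariance fact about Lebesgue measure on $\mathbb S^1$. First, I would rewrite each decoding event as a membership event for the shared shift $b$. By definition of the arithmetic interval, $\Dec(u)=y_k$ holds if and only if $u\in I(y_k)=I_k$. Hence
\[
\{\Dec(b+\alpha_k)=y_k\}=\{b+\alpha_k\in I_k\}=\{b\in J_k\},
\]
where the last equality is just the definition of $J_k$. The event of interest therefore becomes
\[
\{\Dec(b+\alpha_i)=y_i,\ \Dec(b+\alpha_j)=y_j\}=\{b\in J_i\cap J_j\}.
\]

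Second, I would compute the probability. Because $b\sim\mathrm{Unif}(\mathbb S^1)$, its law is exactly the normalized Lebesgue measure $\lambda_{\mathbb S^1}$. It follows immediately that
\[
q_{ij}(y_i,y_j)=\Pr(b\in J_i\cap J_j)=\lambda_{\mathbb S^1}(J_i\cap J_j).
\]

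For this to be meaningful, $J_k$ must be measurable. I would check this by observing that $J_k$ is the image of $I_k$ under the rotation $u\mapsto u-\alpha_k$, which is a measure-preserving bijection of $\mathbb S^1$. Consequently $J_k$ is a circular arc, possibly wrapping past $0$, with $\lambda(J_k)=W(y_k)=\pi_\theta(y_k)$. This also recovers the correct marginals as a sanity check, and it is the same invariance used in the proof of Theorem~\ref{thm:dyadic}.

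The only real subtlety is bookkeeping rather than difficulty. One point is the convention that $I_0(y)$ is half-open, which makes the $\Dec$ preimages partition $[0,1)$ exactly. The other is that a shifted arc may wrap around $0$, so its intersection can consist of up to two arcs. Neither affects the lemma, since it is stated as a measure of an intersection of sets. Explicit overlap formulas, as in Figure~\ref{fig:overlap-main}, are deferred to the subsequent proposition.
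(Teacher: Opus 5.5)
Your proposal is correct and matches the paper's proof: you rewrite each event $\{\Dec(b+\alpha_k)=y_k\}$ as $\{b\in J_k\}$, intersect the two events, and use that $b$ has law $\lambda_{\mathbb S^1}$. Your added remark that $J_k$ is a rotated arc of length $\pi_\theta(y_k)$, which settles measurability and recovers the marginals, is a sound supplement that the paper leaves implicit.
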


\begin{proof}
By definition of \(I_i\),
\[
\Dec(b+\alpha_i)=y_i
\qquad\Longleftrightarrow\qquad
b+\alpha_i\in I_i.
\]
By definition, \(J_i=I_i-\alpha_i\), so the above is equivalent to
\[
b\in J_i.
\]
Similarly,
\[
\Dec(b+\alpha_j)=y_j
\qquad\Longleftrightarrow\qquad
b\in J_j.
\]
Therefore the joint event is exactly
\[
\{b:\Dec(b+\alpha_i)=y_i,\ \Dec(b+\alpha_j)=y_j\}
=
J_i\cap J_j.
\]
Since \(b\) is uniform on \(\mathbb S^1\), the probability of this event is
\[
q_{ij}(y_i,y_j)
=
\lambda_{\mathbb S^1}(J_i\cap J_j).
\]
\end{proof}

Thus, pair probabilities in a QMC group are determined by overlaps among the shifted intervals. 

\paragraph{Numerically stable calculation}

Lemma~\ref{lem:pair-prob-circular-overlap} shows that the pair probability is
the length of the overlap \(J_i\cap J_j\), where
\[
J_k=I_k-\alpha_k.
\]
Naively, this involves materializing the arithmetic intervals which is numerically unstable since it requires computing the raw sequence probabilities. 
We now describe a numerically stable alternative.
A useful way to think about how to compute this overlap is through how much we can perturb this shared
random shift while maintaining the same two decoded sequences.

Let \(b_0\) be the realized shared shift that generated the observed sequences.
Thus
\[
u_k=b_0+\alpha_k \pmod 1,
\qquad
u_k\in I_k.
\]

Now consider a candidate shifted value \(b_0+\delta\). For sample \(k\), this
perturbation keeps the decoded sequence equal to \(y_k\) precisely when
\[
b_0+\delta\in J_k.
\]
Equivalently, adding back the offset \(\alpha_k\),
\[
u_k+\delta\in I_k.
\]
Thus \(J_k-b_0\) is exactly the set of perturbations \(\delta\) of the shared
shift that keep sample \(k\) decoding to \(y_k\).

By translation invariance of Lebesgue measure on \(\mathbb S^1\), we may center
the calculation at \(b_0\):
\[
\lambda_{\mathbb S^1}(J_i\cap J_j)
=
\lambda_{\mathbb S^1}\bigl((J_i-b_0)\cap(J_j-b_0)\bigr).
\]
Moreover, without loss of generality, we can assume the shared anchor is at \(0\), and the problem can be reduced to computing which perturbations \(\delta\)  keep both decoded sequences the same.

The residual coordinate gives these allowable perturbations. For the
\(k\)th sample, define
\[
z_k=\frac{u_k-L_k}{W_k}\in[0,1).
\]
Equivalently,
\[
u_k=L_k+W_kz_k.
\]
Thus \(z_k\) is the fractional position of the sampled latent inside its final
arithmetic interval
\[
I_k=[L_k,L_k+W_k).
\]
It determines the left and right slacks of \(u_k\) within \(I_k\):
\[
\ell_k:=u_k-L_k=W_kz_k,
\qquad
r_k:=L_k+W_k-u_k=W_k(1-z_k).
\]
In words, \(\ell_k\) is how far we can move \(u_k\) to the left before leaving
\(I_k\), and \(r_k\) is how far we can move \(u_k\) to the right before leaving
\(I_k\).

Therefore the perturbation that keep sample \(k\) fixed are thes ones where \(u_k+\delta\) remains inside \(I_k\). 
This is exactly when
\[
\delta\in[-\ell_k,r_k)\pmod 1.
\]

Since this condition must be satisfied so that both samples $i$ and $j$ remain fixed, we obtain
\[
q_{ij}(y_i,y_j)
=
\lambda_{\mathbb S^1}
\left(
[-\ell_i,r_i)\cap[-\ell_j,r_j)
\right),
\]
Again, the intersection is understood on the circle.

\begin{proposition}[Stable formula for the circular overlap]
\label{prop:stable-circular-overlap}
% For \(k\in\{i,j\}\), let
% \[
% \ell_k=W_kz_k,
% \qquad
% r_k=W_k(1-z_k).
% \]
% Then
% \[
% \boxed{
% q_{ij}(y_i,y_j)
% =
% \min(\ell_i,\ell_j)
% +
% \min(r_i,r_j)
% +
% (r_i+\ell_j-1)_+
% +
% (\ell_i+r_j-1)_+
% }
% \]
% where \((a)_+=\max\{a,0\}\).
\end{proposition}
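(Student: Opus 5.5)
The plan is to start from the reduction just before the statement,
\[
q_{ij}(y_i,y_j)=\lambda_{\mathbb S^1}\bigl(A_i\cap A_j\bigr),\qquad A_k:=[-\ell_k,r_k)\pmod 1,
\]
and compute this circular overlap by lifting it to the real line. The expected result is
\[
q_{ij}(y_i,y_j)=\min(\ell_i,\ell_j)+\min(r_i,r_j)+(r_i+\ell_j-1)_+ +(\ell_i+r_j-1)_+ .
\]
The only facts about the arcs I need are the following:
\begin{itemize}
\item $\ell_k,r_k\ge 0$;
\item $\ell_k+r_k=W_k=\pi_\theta(y_k)\le 1$;
\item hence $\ell_k,r_k\in[0,1]$, and the real interval $\tilde A_k:=[-\ell_k,r_k)\subset\mathbb R$ contains $0$ and has length at most $1$.
\end{itemize}

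First I would justify the lifting identity. Let $p:\mathbb R\to\mathbb S^1$ be the quotient map. A half-open interval of length at most $1$ maps injectively onto its image. Therefore
\[
\lambda_{\mathbb S^1}(A_i\cap A_j)=\sum_{n\in\mathbb Z}\bigl|\tilde A_i\cap(\tilde A_j+n)\bigr| .
\]
To see this, parametrize $A_i$ by $\tilde A_i$. A point $x\in\tilde A_i$ has $p(x)\in A_j$ iff $x\in\tilde A_j+n$ for some $n$, and that $n$ is unique because the translates $\tilde A_j+n$ are pairwise disjoint (length $\le 1$, half-open). Since $\tilde A_i,\tilde A_j\subset[-1,1]$, only $n\in\{-1,0,1\}$ can give positive measure. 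The case $n=\pm2$ could at most touch an endpoint, which is a null set.

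Next I would evaluate the three terms one at a time.
\begin{itemize}
\item For $n=0$, both intervals contain $0$. Their intersection is therefore $[-\min(\ell_i,\ell_j),\min(r_i,r_j))$, which has length $\min(\ell_i,\ell_j)+\min(r_i,r_j)$.
\item For $n=1$, the intersection is $[\max(-\ell_i,1-\ell_j),\,\min(r_i,r_j+1))$. Because $\ell_j\le 1$ we have $1-\ell_j\ge 0\ge-\ell_i$, and because $r_i\le 1\le r_j+1$ the upper end is $r_i$. The length is therefore $(r_i+\ell_j-1)_+$.
\item For $n=-1$, the computation is symmetric, with the roles of $i$ and $j$ swapped, and gives $(\ell_i+r_j-1)_+$.
\end{itemize}
Summing the three terms yields the stated formula. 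The $n=0$ term is the no-wrap picture of Figure~\ref{fig:overlap-main}; the positive-part terms are the wrap-around contributions, which are nonzero only when the two arcs are long enough to meet on the far side of the circle.

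The main point needing care is the lifting step: showing that the sum over translates neither double counts nor misses mass. This rests entirely on $W_k\le 1$ and half-openness, including the degenerate case $W_k=1$, where $A_k$ is the whole circle. In that case I would check directly that the formula returns $W_{j}$ (respectively $W_i$), as it should. I would also remark that every quantity involved, namely $\ell_k=W_kz_k$ and $r_k=W_k(1-z_k)$, is obtained from the running residual coordinate of \eqref{ref:rescaled_us} and the sequence probability. No interval endpoint $L_k$ ever has to be formed, which is the sense in which the formula is numerically stable.
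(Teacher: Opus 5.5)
Your proof is correct and is essentially the paper's argument: the paper cuts the circle at $0$, writes $K_k=[0,r_k)\cup[1-\ell_k,1)$, and sums four pairwise piece overlaps. Its two no-wrap pieces together make up your $n=0$ term, and its two wrap-around pieces are your $n=\pm1$ terms. Your lifting formulation has one small advantage: it makes explicit that $W_k=\ell_k+r_k\le 1$ is what rules out double counting, a condition the paper's decomposition uses only tacitly.
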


\begin{proof}
By the calculation above,
\[
q_{ij}(y_i,y_j)
=
\lambda_{\mathbb S^1}(K_i\cap K_j),
\]
where
\[
K_i=[-\ell_i,r_i)\pmod 1,
\qquad
K_j=[-\ell_j,r_j)\pmod 1.
\]
Cut the circle at \(0\). Then,
\[
K_k=[0,r_k)\cup[1-\ell_k,1).
\]
The overlap has four possible contributions.

The first two conditions are best understood by considering how two sub-intervals within the unit interval can intersect.
First, the two right pieces overlap by
\[
\lambda\bigl([0,r_i)\cap[0,r_j)\bigr)=\min(r_i,r_j).
\]
Second, the two left pieces overlap by
\[
\lambda\bigl([1-\ell_i,1)\cap[1-\ell_j,1)\bigr)
=
\min(\ell_i,\ell_j).
\]

The last conditions are more subtle and occur when there is wrap-around. 

First, the right piece of \(K_i\) can overlap the left piece of \(K_j\):
\[
[0,r_i)\cap[1-\ell_j,1).
\]
This intersection is nonempty exactly when \(r_i+\ell_j>1\). In that case its
length is \(r_i+\ell_j-1\). Thus its contribution is
\[
(r_i+\ell_j-1)_+.
\]
Similarly, the left piece of \(K_i\) can overlap the right piece of \(K_j\),
contributing
\[
(\ell_i+r_j-1)_+.
\]
Adding the four contributions gives
\[
q_{ij}(y_i,y_j)
=
\min(\ell_i,\ell_j)
+
\min(r_i,r_j)
+
(r_i+\ell_j-1)_+
+
(\ell_i+r_j-1)_+.
\]

\end{proof}

% ---- colors (seaborn deep) ----
\definecolor{colI}{HTML}{C44E52} % K_i
\definecolor{colJ}{HTML}{4C72B0} % K_j
\definecolor{colO}{HTML}{55A868} % overlap
 
% ---- dimensions ----
\newcommand{\ovRI}{0.78}\newcommand{\ovRJ}{0.95}
\newcommand{\ovrin}{0.60}\newcommand{\ovRC}{1.06}
\newcommand{\ovLW}{3.0}\newcommand{\ovbh}{0.13}
\newcommand{\ovhI}{0.10}\newcommand{\ovhJ}{0.34}\newcommand{\ovbarTop}{0.50}
 
% ---- circle panel: #1=li #2=ri #3=lj #4=rj #5=hs #6=he ----
\newcommand{\ovCircle}[6]{%
  \fill[colO,opacity=0.38]
    ({360*#5}:\ovrin) arc[start angle={360*#5},end angle={360*#6},radius=\ovrin]
    -- ({360*#6}:\ovRC) arc[start angle={360*#6},end angle={360*#5},radius=\ovRC] -- cycle;
  \draw[colO!75,line width=0.4pt] ({360*#5}:\ovrin)--({360*#5}:\ovRC);
  \draw[colO!75,line width=0.4pt] ({360*#6}:\ovrin)--({360*#6}:\ovRC);
  \draw[gray!45,line width=0.5pt] (0,0) circle (\ovRC);
  \draw[colI,line width=2.4pt,line cap=round]
    ({-360*#1}:\ovRI) arc[start angle={-360*#1},end angle={360*#2},radius=\ovRI];
  \draw[colJ,line width=2.4pt,line cap=round]
    ({-360*#3}:\ovRJ) arc[start angle={-360*#3},end angle={360*#4},radius=\ovRJ];
  \draw[gray!75] (0:\ovRC)--(0:{\ovRC+0.07});
  \node[font=\scriptsize,inner sep=1pt,anchor=west] at (0:{\ovRC+0.05}) {$0$};
}
 
% ---- line panel: same args ----
\newcommand{\ovLine}[6]{%
  \fill[colO,opacity=0.32] ({#5*\ovLW},-0.03) rectangle ({#6*\ovLW},\ovbarTop);
  \draw[gray!55,line width=0.5pt] (0,0)--(\ovLW,0);
  \draw[gray!55] (0,0.05)--(0,-0.05) (\ovLW,0.05)--(\ovLW,-0.05);
  \node[font=\scriptsize,anchor=north] at (0,-0.06) {$0$};
  \node[font=\scriptsize,anchor=north] at (\ovLW,-0.06) {$1$};
  \fill[colI] (0,\ovhI) rectangle ({#2*\ovLW},{\ovhI+\ovbh});
  \fill[colI] ({(1-#1)*\ovLW},\ovhI) rectangle (\ovLW,{\ovhI+\ovbh});
  \fill[colJ] (0,\ovhJ) rectangle ({#4*\ovLW},{\ovhJ+\ovbh});
  \fill[colJ] ({(1-#3)*\ovLW},\ovhJ) rectangle (\ovLW,{\ovhJ+\ovbh});
}
 
% ============================================================
\begin{figure}[t]
  \centering
  \begin{tikzpicture}[font=\small]
 
    % legend
    \begin{scope}[yshift=1.75cm]
      \draw[colI,line width=2.4pt,line cap=round] (-3.2,0)--(-2.8,0);
      \node[anchor=west,font=\footnotesize] at (-2.75,0) {$K_i=[-\ell_i,r_i)$};
      \draw[colJ,line width=2.4pt,line cap=round] (-0.1,0)--(0.3,0);
      \node[anchor=west,font=\footnotesize] at (0.35,0) {$K_j=[-\ell_j,r_j)$};
      \fill[colO,opacity=0.5] (2.6,-0.08) rectangle (2.9,0.08);
      \node[anchor=west,font=\footnotesize] at (2.95,0) {overlap};
    \end{scope}
 
    % column headers
    \node[font=\itshape] at (0,1.26) {circle $\mathbb{T}$};
    \node[font=\itshape] at ({1.5+\ovLW/2},1.26) {unwrapped $[0,1]$, cut at $0$};
 
    % Row 1 : no wrap, min(r)
    \begin{scope}[yshift=0cm]
      \node[anchor=east,align=right,font=\footnotesize,text width=2.3cm] at (-1.35,0)
        {$\displaystyle\min(r_i,r_j)$\\[2pt]\scriptsize right slacks};
      \begin{scope} \ovCircle{0.18}{0.12}{0.10}{0.22}{0}{0.12} \end{scope}
      \begin{scope}[xshift=1.5cm,yshift=-0.22cm] \ovLine{0.18}{0.12}{0.10}{0.22}{0}{0.12} \end{scope}
    \end{scope}
 
    % Row 2 : no wrap, min(l)
    \begin{scope}[yshift=-2.7cm]
      \node[anchor=east,align=right,font=\footnotesize,text width=2.3cm] at (-1.35,0)
        {$\displaystyle\min(\ell_i,\ell_j)$\\[2pt]\scriptsize left slacks};
      \begin{scope} \ovCircle{0.18}{0.12}{0.10}{0.22}{0.90}{1.0} \end{scope}
      \begin{scope}[xshift=1.5cm,yshift=-0.22cm] \ovLine{0.18}{0.12}{0.10}{0.22}{0.90}{1.0} \end{scope}
    \end{scope}
 
    % Row 3 : wrap (r_i + l_j - 1)+
    \begin{scope}[yshift=-5.4cm]
      \node[anchor=east,align=right,font=\footnotesize,text width=2.5cm] at (-1.35,0)
        {$\displaystyle (r_i+\ell_j-1)_+$\\[2pt]\scriptsize wrap-around};
      \begin{scope} \ovCircle{0.2}{0.6}{0.6}{0.2}{0.4}{0.6} \end{scope}
      \begin{scope}[xshift=1.5cm,yshift=-0.22cm] \ovLine{0.2}{0.6}{0.6}{0.2}{0.4}{0.6} \end{scope}
    \end{scope}
 
    % Row 4 : wrap (l_i + r_j - 1)+  (mirror)
    \begin{scope}[yshift=-8.1cm]
      \node[anchor=east,align=right,font=\footnotesize,text width=2.5cm] at (-1.35,0)
        {$\displaystyle (\ell_i+r_j-1)_+$\\[2pt]\scriptsize wrap-around (mirror)};
      \begin{scope} \ovCircle{0.6}{0.2}{0.2}{0.6}{0.4}{0.6} \end{scope}
      \begin{scope}[xshift=1.5cm,yshift=-0.22cm] \ovLine{0.6}{0.2}{0.2}{0.6}{0.4}{0.6} \end{scope}
    \end{scope}
 
  \end{tikzpicture}
  \caption{%
The pairwise probability $q_{ij}=\lambda_{\mathbb T}(K_i\cap K_j)$ decomposes into four terms by
Proposition~\ref{prop:stable-circular-overlap}.    
In each row, we visualize the intersections, showing both the intersection on the circle (left) and the corresponding intersections on the unit interval (arbitrarily) cut at $0$ (right).
Rows~1--2 show the two pieces of the overlap adjacent to the cut at $0$, with lengths $\min(r_i,r_j)$ and $\min(\ell_i,\ell_j)$.
Rows~3--4 show the additional ``cross'' intersections that appear when one
interval reaches far enough to meet the opposite side of the other interval,
with lengths $(r_i+\ell_j-1)_+$ and $(\ell_i+r_j-1)_+$.
}
  \label{fig:circular-overlap}
\end{figure}
 

%%%%%%%%%%%%%%%%%%%%%%%%%%%%%%%%%%%%%%%%%%%%%%%%%%%%%%%%%%%%

\newpage

\end{document}